\documentclass{article}

\usepackage{times}
\usepackage{graphicx} 

\usepackage{amsmath,amssymb,stmaryrd}
\usepackage{amsthm}
\usepackage{enumitem}

\usepackage{caption}
\usepackage{subcaption}

\usepackage{natbib}

\usepackage{algorithm}
\usepackage{algorithmic}

\usepackage{hyperref}

\usepackage{fullpage}

\newtheorem{theorem}{Theorem}
\newtheorem{lemma}{Lemma}

\def\fro{{\scriptscriptstyle \mathrm{F}}}
\def\triple{{|\!|\!|}}
\def\ab{{\mathbf a}}

\def\xb{{\mathbf x}}

\def\zb{{\mathbf z}}
\def\bb{{\mathbf b}}

\def\wb{{\mathbf w}}

\def\eb{{\mathbf e}}
\def\ub{{\mathbf u}}
\def\vb{{\mathbf v}}

\def\alphab{{\boldsymbol\alpha}}
\def\betab{{\boldsymbol\beta}}

\def\Deltab{{\boldsymbol\Delta}}

\def\lambdab{{\boldsymbol\lambda}}
\def\mub{{\boldsymbol\mu}}

\def\oneb{{\mathbf 1}}

\def\Ib{{\mathbf I}}
\def\Ab{{\mathbf A}}
\def\Gb{{\mathbf G}}

%
\def\Real{{\mathbb{R}}}

\newcommand{\Int}[1]{\{1,\cdots,#1\}}

\def\Gcal{\mathcal{G}}
\def\Ocal{\mathcal{O}}
\def\Rcal{\mathcal{R}}
\def\Bcal{\mathcal{B}}
\def\Hcal{\mathcal{H}}

\def\Ical{\mathcal{I}}
\def\Acal{\mathcal{A}}
\def\Dcal{\mathcal{D}}
\def\Ecal{\mathcal{E}}

\def\Pcal{\mathcal{P}}

\def\Scal{\mathcal{S}}
\def\Xcal{\mathcal{X}}
\def\Lcal{\mathcal{L}}
\def\Qcal{\mathcal{Q}}
%
%
\def\dom{{\mathrm{dom}}}

\def\Exp{{\mathbb{E}}}

\def\defin{\triangleq}


\usepackage[colorinlistoftodos,bordercolor=orange,backgroundcolor=orange!20,linecolor=orange,textsize=scriptsize]{todonotes}
\newcommand{\be}{\mathbf{e}}

\newcommand{\bx}{\mathbf{x}}
\newcommand{\bal}{\boldsymbol{\alpha}}
\newcommand{\bI}{\mathbf{I}}
\newcommand{\bDel}{\boldsymbol{\Delta}}
\newcommand{\bmu}{\boldsymbol{\mu}}
\newcommand{\bW}{\mathbf{W}}

\newcommand{\cS}{\mathcal{S}}

\newcommand{\R}{\mathbb{R}}
\newcommand{\eqdef}{\triangleq}
\def\Tr{{\mathrm{Tr}}}

\def\hlb{{\hat{\lambdab}}}
\def\hxb{{\hat{\xb}}}
\def\hAb{{\hat{\Ab}}}

\title{Online optimization and regret guarantees \\for non-additive long-term constraints}

\author{Rodolphe Jenatton \\ 
Amazon.com \\ 
Berlin, Germany \\
\texttt{jenatton@amazon.com}
\and
Jim C. Huang \\ 
Amazon.com \\ 
Seattle, WA 98109 USA \\
\texttt{huangjim@amazon.com}
\and
Dominik Csiba \\
University of Edinburgh \\
Edinburgh, UK \\
\texttt{cdominik@gmail.com}
\and
Cedric Archambeau \\ 
Amazon.com \\ 
Berlin, Germany \\
\texttt{cedrica@amazon.com}
}

\begin{document} 

\maketitle

\begin{abstract} 
We consider online optimization in the 1-lookahead setting, where the objective does not decompose additively over the rounds of the online game. The resulting formulation enables us to deal with non-stationary and/or long-term constraints, which arise, for example, in online display advertising problems. We propose an online primal-dual algorithm for which we obtain dynamic cumulative regret guarantees. They depend on the convexity and the smoothness of the non-additive penalty, as well as terms capturing the smoothness with which the residuals of the non-stationary and long-term constraints vary over the rounds. We conduct experiments on synthetic data to illustrate the benefits of the non-additive penalty and show vanishing regret convergence on live traffic data collected by a display advertising platform in production.
\end{abstract} 
\section{Introduction}

Online optimization can be viewed as a sequential game where in each round $t \in\Int{T}$, we are required to play an action, represented by a vector $\mathbf{x}_t$, which takes values in a set of actions $\{\Xcal_t: \Xcal_t \subseteq \Real^d\}_{t=1}^T$. We then observe a reward $f_t(\mathbf{x}_t)$ as a function of the action we chose in round $t$. The goal is to generate a sequence of actions such that some measure of performance is for instance maximized over the course of $T$ rounds.

The performance metric typically adopted in the online learning framework is the \textit{cumulative regret}~\cite{Cesa-Bianchi2006}. Moreover, online learning traditionally assumes that the objective function has an \textit{additive} structure that nicely decomposes as a sum of regrets over the rounds of the optimization. This means that there is no coupling of $\xb_t$'s across successive rounds when the sets $\Xcal_t$ are decoupled across time. Hence, the challenge in online learning rather lies in the fact that $\xb_t$ must be estimated before having access to the reward function $f_t$, which corresponds to the so-called \textit{$0$-lookahead} setting~\cite{Buchbinder2012,Andrew2013}. In the particular case of dynamic regret with additive objective functions~\cite{Zinkevich2003,Cesa-Bianchi2012,Hall2013,Jadbabaie2015}, one seeks to analyze how, under various assumptions on $f_t$ and $\Xcal_t$, the sequence of rewards $f_t(\hxb_t)$ collected in an online fashion compares with the best sequence of rewards $f_t(\xb_t^\star)$ collected in hindsight. The dynamic regret over $T$ rounds is defined as follows:
\begin{equation}\label{eq:traditional_dynamic_regret}
\frac{1}{T} \sum_{t=1}^T f_t(\xb_t^\star)  - \frac{1}{T} \sum_{t=1}^T f_t(\hxb_t) ,
\end{equation}
with
$\frac{1}{T} \!\sum_{t=1}^T f_t(\xb_t^\star) \!\!=\!\! {\max}_{\xb_1 \in \Xcal_1,\dots,\xb_T \in \Xcal_T}\!\frac{1}{T}\! \sum_{t=1}^T f_t(\xb_t) .$
Unlike static regret analysis~\cite{Zinkevich2003}, where we compare against a \textit{single} best action $\xb^\star$ in retrospect over all rounds played, the best dynamic comparator, as its name indicates, need not be identical for all $t \in \Int{T}$.

In this work, we consider a setting similar to online learning with dynamic regret, 
but we do not assume that successive actions $\xb_t$ are decoupled.
We provide novel regret bounds for the case where rewards are additive and the regrets are non-additive as the total cost over $T$ rounds is non-decomposable. There is a small body of recent work that study non-additive regrets. For example, \citet{Rakhlin2010} consider a wide class of non-additively decomposable objective functions in a 0-lookahead setting, covering for instance the problems of Blackwell's approachability, the calibration of forecasters and the global cost online learning game from~\citet{Even-Dar2009}. More recently, \citet{Kar2014} handle some specific form of non-decomposability in relation with the online optimization of metrics such as the precision at $k$.  Closer in spirit to our work is the one by \citet{Agrawal2015}, who provide expected regret bounds in the case where non-additive costs depend only on the empirical mean of the actions. 

Metrical task systems (MTS)~\cite{Borodin1992} provide an alternative analysis framework for online optimization. They consider \textit{movement costs} that penalize variations of $\xb_t$'s \textit{across time} and rely on competitive analysis~\cite{Borodin2005}. Similar to online learning with dynamic regret, the performance of the online optimization algorithm is compared to the best sequence of actions. However, instead of measuring cumulative regret, competitive analysis adopts a \emph{multiplicative} metric, known as the competitive ratio. We refer the interested reader to \cite{Buchbinder2012,Andrew2013} for a detailed discussion. Moreover, the online game in MTS differs from the online learning setup in that it follows the \textit{$1$-lookahead} setting where the player has access to the reward function $f_t$ \textit{before} estimating $\xb_t$. The typical instantiation of the movement cost in MTS is a total variation penalty $\sum_{t=1}^{T-1} \| \xb_t - \xb_{t+1} \|_2$, that can also be defined with non-Euclidean norms~\cite{Bera2013}. 
While the movement cost introduces dependencies across actions $\xb_t$ in successive rounds, we note that, to the best of our knowledge, previous work only considered movement costs with an additive structure. We depart from this approach by considering non-additive penalties.  

The aforementioned frameworks have been successfully applied to derive and analyze online optimization algorithms in  for example online routing \cite{Awerbuch2008}, process migration of servers \cite{Borodin2005} and portfolio allocation \cite{portfolio}. However, the assumption that regrets or movement costs have an additive structure is restrictive in practice. In particular, we consider the problem of \textit{online ad allocation}, which is at the core of modern display advertising systems. The online ad allocation \cite{rtb} problem consists of sequentially allocating ad impressions (encoded by $\mathbf{x}_t$) to a large number of competing ad slots across a large number of websites and mobile apps, subject to a variety of advertiser objectives and constraints. Advertisers will typically expect that in solving the ad allocation problem, we maximize a measure of ad performance (or advertiser welfare), subject to constraints on user targeting and constraints on ad delivery (e.g., spend as close to 100\% of an ad's budget as possible over $T$ rounds). The online ad allocation problem is characterized by non-stationarity in constraints (e.g., for a budget-constrained allocation, the amount of budget consumed per round varies dynamically). Moreover, in such an application, constraints are measured by the advertiser only in a long-term sense (e.g., budget consumed by the end of an advertising campaign), which requires the use of non-additive constraints.

Hence, the online ad allocation problem can be viewed as a hybrid between online learning with dynamic regret and MTS. It falls into the 1-lookahead setting and requires non-additive constraints, but there is no practically-justifiable concept of movement cost on $\hat{\mathbf{x}}_t$ that can be applied to account for non-additive constraints. Moreover, the movement cost is not a sensible penalty to capture the fact that one would like to show different ads in successive rounds. Finally, online ad allocation is concerned with satisfying \emph{long-term} constraints, which means that the cumulative constraint violations resulting from the sequence of vectors $\{\mathbf{x}_t\}_{t=1}^{T}$ should not exceed a certain amount by the final round $T$. The previous approaches proposed in the online learning literature~\cite{Mahdavi2012,Agrawal2015} are not suitable for our use-case since they handle stationary long-term constraints with static regret guarantees in the 0-lookahead setting. And again, this type of constraints is  difficult to encode as movement costs.

\paragraph{Contributions.} Within the 1-lookahead setting, we study an online optimization problem with dynamic regret defined in terms of a sum of concave rewards and a penalty that does not decompose additively over the rounds of the online game. Non-additive penalties are suitable for modeling \textit{non-stationary} and/or  \textit{long-term} constraints, which are of practical importance in display advertising. The resulting formulation is, to the best of our knowledge, novel and extends the work of~\citet{Mahdavi2012} and~\citet{Agrawal2015}. We further provide dynamic cumulative regret guarantees, showing that vanishing regret is driven by the convexity and the smoothness of the non-additive penalty, along with the smoothness with which the residuals vary over the rounds. Hence, the structure of our bound echoes results derived in previous work within the context of 0-lookahead dynamic regret analysis~\cite{Zinkevich2003}. Finally, we illustrate our methodological contribution by conducting experiments on synthetic data to validate the benefits of non-additive penalties and study the convergence of the cumulative regret on live traffic data collected by a display advertising platform in production.

\section{Problem statement}

Motivated by the practical realities of the online ad allocation problem, we focus on a class of online optimization problems in which the actions played over time must not only maximize some rewards but also minimize \emph{long-term} constraints that are penalized by some \textit{non-additive} error term through the function $\Ecal$. Formally, for any $(\xb_1,\dots,\xb_T) \in \Xcal_1 \times \dots \times \Xcal_T$, we study the online maximization of objective functions of the form
\begin{equation}\label{eq:primal_cost}
\!\!\!\Pcal(\xb_1,\dots,\xb_T)\! \defin\! \frac{1}{T}
\!\sum_{t=1}^T f_t(\xb_t)
-
\Ecal
\bigg(
\!\frac{1}{T}\sum_{t=1}^T \Ab_t \xb_t - \bb_t\!\!
\bigg) ,
\end{equation}
where the matrices $\Ab_t \in \Acal \subseteq \Real^{m \times d}$ and vectors $\bb_t \in \Real^m$ encode the non-stationary, long-term constraints. 
Although the sets $\Xcal_t$'s depend on $t$, they are assumed to be fully available at each round $t$ and do not take part in the online game.

Our goal is to derive an online algorithm and prove that the sequence $(\hxb_1,\dots,\hxb_T) \in  \Xcal_1 \times \dots \times \Xcal_T$ it generates is guaranteed to satisfy a regret bound of the form
\begin{equation}\label{eq:informal_regret_bound}
\Pcal^\star  -  \Pcal(\hxb_1,\dots,\hxb_T)
\leq \Rcal_T + \Scal_{\eb} + \Scal_{\Ab} , 
\end{equation}
where we have defined the optimal (primal) objective
\begin{equation}\label{eq:argmax_offline}
\Pcal^\star \defin\!\!\!\! \!\!\max_{\xb_1 \in \Xcal_1,\dots,\xb_T \in \Xcal_T} \!\!\! \Pcal(\xb_1,\dots,\xb_T) \! = \! \Pcal(\xb_1^\star,\dots,\xb_T^\star).
\end{equation}
The regret bound in~(\ref{eq:informal_regret_bound}) is comprised of three terms, each capturing different aspects of the problem.

The first term $\Rcal_T$ quantifies the contribution due to the application of an online projected subgradient method in the dual problem for~(\ref{eq:primal_cost}), as described in Section~\ref{sec:lagrangian}.

The second term $\Scal_{\eb}$ quantifies the impact of the smoothness with which the sequences of $m$-dimensional \textit{error vectors} $\{\eb_t^\star\}_{t=1}^T$ evolve over time, where $
\eb_t^\star \defin \Ab_t \xb_t^\star - \bb_t.$ In this respect, the term $\Scal_\eb$ will be reminiscent of the guarantees traditionally obtained with dynamic regret analysis ~\cite{Cesa-Bianchi2012}, 
with the key exception that the smoothness is not based on the variables $\xb_t^\star$ themselves, but rather on the error vectors $\eb_t^\star$. This term also has a natural interpretation and relevance in the online advertising setting, as discussed in Section~\ref{sec:online_ad_allocation_exp}. The derivation for $\Scal_\eb$ will be the topic of Section~\ref{sec:error_smoothness}.\par
The third and last term $\Scal_{\Ab}$ models the impact of having to estimate constraint matrices $\Ab_t$ via $\hAb_t$, as in practice $\Ab_t$ is not known at the time when the action $\xb_t$ is played. This aspect is also motivated by practical aspects of the online ad allocation problem. As it will be made clear in Section~\ref{sec:estimate_A}, $\Scal_\Ab$ will depend on how smoothly the constraint matrices $\Ab_t$'s evolve over time.
Given the above, our methodological contributions can thus be viewed as both a regret analysis with dynamic comparators for non-additive objective functions, and a competitive analysis evaluated through a regret criterion -- or more precisely, a \textit{competitive difference} criterion, as defined in \citet{Andrew2013} -- where the service cost is $f_t$ and our movement cost $\Ecal$ is convex and non-additive. We will interchangeably refer to both competitive difference and dynamic regret in the sequel.

\section{Analysis}\label{sec:analysis}
We describe in this section the main components of our analysis, starting with the saddle point formulation.
The proofs of the results are relegated to the Appendix.
\subsection{Linearization and Lagrangian formulation}\label{sec:lagrangian}
Since the main challenge of our analysis lies in the fact that $\Ecal$ breaks the additive structure of~(\ref{eq:traditional_dynamic_regret}), a natural strategy is therefore to linearize $\Ecal$ via Fenchel conjugacy, similar to~\cite{Agrawal2015}. We define the Fenchel conjugate of $\Ecal$ as $\Ecal^\star(\lambdab) \defin \sup_{\mathbf{z} \in \dom(\Ecal)} \{ \lambdab^\top\mathbf{z} - \Ecal(\mathbf{z})\}$~\cite{Boyd2004}, and from now on, we assume that\par
\textbf{(A1)} The function $\Ecal$ is convex.\footnote{When we refer to convex/concave functions, we implicitly assume closed proper convex/concave functions~\cite{Boyd2004}.}
\par
\textbf{(A2)} The Fenchel conjugate $\Ecal^\star$ of $\Ecal$ has its domain 
$$
\Lambda \defin \dom(\Ecal^\star)  = \{  \lambdab \in \Real^m : \Ecal^\star(\lambdab) < +\infty  \}
$$
that is compact so that there exists $R_\lambda\! \defin\displaystyle\!  \max_{\lambdab \in \Lambda} \| \lambdab \|_2 < +\infty$.\par
\textbf{(A3)} For any $t\in\Int{T}$, the function $f_t$ is concave, and the set $\Xcal_t$ is compact and convex with $\Xcal_t \subseteq \dom(f_t)$,  
so there exists $R_x\!\defin\! \max_{\substack{t\in  \Int{T}, \xb_t \in \Xcal_t}}\! \|\xb_t\|_2 \!< \!+\infty$.
\par
Equipped with those assumptions, we introduce for any $\xb_t \in \Xcal_t$, $\lambdab \in \Lambda$ the Lagrangian function for round $t$:
\begin{equation}\label{eq:lag}
\Lcal_t(\xb_t,\lambdab) \defin f_t(\xb_t) - \lambdab^\top (\Ab_t \xb_t - \bb_t) + \Ecal^\star(\lambdab) .
\end{equation}
It can be observed that $\min_{\lambdab \in \Lambda} \frac{1}{T} \sum_{t=1}^T \Lcal_t(\xb_t,\lambdab) = \Pcal(\xb_1,\dots,\xb_T)$.

In the sequel, we shall refer to $\lambdab \in \Real^m$ as dual variables. Following previous work on online saddle point optimization~\cite{Mahdavi2012,Koppel2014}, our procedure will alternate between maximizing the Lagrangian with respect to the primal variable $\xb_t$ and minimizing with respect to the dual variable $\lambdab$. On the primal side, we note that we are interested in the \textit{1-lookahead} setting, where we have access to $\Lcal_t$ before computing our estimate $\hxb_t$, whereas on the dual side, our sequence of $\hlb_t$'s will be generated according to the \textit{0-lookahead} setting. This follows the sequencing of events in practice for display advertising, where the dual variables can only be updated after we observe $\mathbf{A}_{t}$, which is only observed after we play $\hat{\mathbf{x}}_t$.
Before summarizing the procedure in Algorithm~\ref{alg:with_known_At}, we specify a key computational assumption with respect to the primal variables:

~

\textbf{(A4)} For any $\lambdab \in \Lambda$, we can efficiently compute $\arg\!\max_{\xb_t \in \Xcal_t}  \Lcal_t(\xb_t, \lambdab)$.

~

In this paper, we will be primarily interested in problems where the above assumption \textbf{(A4)} holds (e.g., online ad allocation problems consisting of maximizing linear functions over simplices), so that the complexity and tractability of our proposed algorithm will mostly depend on structural properties of $f_t$ and $\Xcal_t$. Having laid out assumptions, we present our online algorithm in Algorithm~\ref{alg:with_known_At}, which makes use of an online projected subgradient method (OPSM)~\cite{Zinkevich2003,Hazan2007} with respect to the dual variables. The notation $\Pi_\Qcal$ refers to the Euclidean projection onto the set $\Qcal$. Interestingly, while the analysis of the primal objective~(\ref{eq:primal_cost}) requires a comparison to $T$ different optimal primal variables (i.e., measuring performance via dynamic regret), the online optimization with respect to the dual variables reduces to a static regret analysis that compares with an unique optimal dual variable $\lambdab^\star$. A related observation was exploited by~\cite{Shalev-Shwartz2006,Shalev-Shwartz2009}, but following an opposite route, i.e., analyzing the primal objective using static regret, while leveraging the dynamic regret structure of the dual optimization problem. Given Algorithm \ref{alg:with_known_At}, we now
study the impact of applying OPSM for our problem in Lemma \ref{lem:standard_static_regret}, subject to an additional assumption about the boundedness of the subgradients of $\Lcal_t(\xb_t,\cdot)$:

~

\textbf{(A5)} {\small There exists} $G>0, \displaystyle \max_{\substack{t\in\Int{T},\\\xb_t \in \Xcal_t,\lambdab \in \Lambda}} \| \nabla_\lambdab  \Lcal_t(\xb_t,\lambdab) \|_2 \leq G$.

\begin{algorithm}[t]
   \caption{Online saddle point optimization}
   \label{alg:with_known_At}
\begin{algorithmic}
   \STATE {\bfseries Input:} Initial dual variable $\hlb_1$, step sizes $\eta_t$
   \FOR{$t=1$ {\bfseries to} $T$}
   \STATE Receive $\Lcal_t$   \texttt{  //1-lookahead setting}
   \STATE Compute $\hxb_t \in \arg\!\max_{\xb_t \in \Xcal_t}  \Lcal_t(\xb_t, \hlb_t)$
   \STATE Update $\hlb_{t+1} = \Pi_{\Lambda}[ \hlb_{t} - \eta_t \nabla_\lambdab \Lcal_t(\hxb_t, \hlb_t) ]$
   \ENDFOR
\end{algorithmic}
\end{algorithm}
\begin{lemma}\label{lem:standard_static_regret}
Let assumptions \textbf{(A1)-(A3)} and \textbf{(A5)} hold. Let $\kappa \geq 0$ be the strong convexity parameter of $\Lcal_t(\hxb_t,\cdot)$ with respect to the $\ell_2$ norm. It holds for any $\lambdab \in \Lambda$, and sequences $\{\hxb_t,\hlb_t\}_{t=1}^T$ generated by~Algorithm~\ref{alg:with_known_At} that
$$
\frac{1}{T}\sum_{t=1}^T {\cal L}_t(\hxb_t,\hat{\boldsymbol{\lambda}}_t) - {\cal L}_t(\hat{\mathbf{x}}_t,\boldsymbol{\lambda}) \leq {\Rcal}_T,
$$
where if $\Lcal_t(\hxb_t,\cdot)$ is strongly convex ($\kappa \!> \!0$), we take the step size $\eta_t \!=\! \frac{1}{\kappa t}$ and we have $\Rcal_T\! =\! \frac{G^2}{2\kappa T} \log(e T)$, while if $\Lcal_t(\hxb_t,\cdot)$ is convex ($\kappa = 0$), we can choose the step size $\eta_t = \frac{2R_\lambda}{G\sqrt{T}}$ leading to $\Rcal_T = \frac{2R_\lambda G}{\sqrt{T}}$.
\end{lemma}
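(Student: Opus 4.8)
The plan is to observe that for fixed $\hxb_t$, the map $\lambdab \mapsto \Lcal_t(\hxb_t,\lambdab) = f_t(\hxb_t) - \lambdab^\top(\Ab_t\hxb_t - \bb_t) + \Ecal^\star(\lambdab)$ is the sum of a constant, an affine term, and $\Ecal^\star$, so that the dual update in Algorithm~\ref{alg:with_known_At} is exactly online projected subgradient descent run on this sequence of functions over the convex compact set $\Lambda$; the claim is then the classical OPSM static regret bound, specialized to the strongly convex and merely convex regimes. Note at the outset that any strong convexity of $\Lcal_t(\hxb_t,\cdot)$ is inherited entirely from $\Ecal^\star$, since the remaining two terms are affine (resp.\ constant) in $\lambdab$; hence $\kappa$ is well defined and identical across rounds. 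Write $\gb_t \defin \nabla_\lambdab \Lcal_t(\hxb_t,\hlb_t)$ for the subgradient used in the update, which satisfies $\|\gb_t\|_2 \le G$ by \textbf{(A5)}.

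First I would establish the per-round inequality. By $\kappa$-strong convexity of $\Lcal_t(\hxb_t,\cdot)$ (with $\kappa=0$ reducing to plain convexity), for any $\lambdab\in\Lambda$,
\[
\Lcal_t(\hxb_t,\hlb_t) - \Lcal_t(\hxb_t,\lambdab) \le \gb_t^\top(\hlb_t - \lambdab) - \tfrac{\kappa}{2}\|\hlb_t - \lambdab\|_2^2 .
\]
Next, since $\lambdab\in\Lambda$ and Euclidean projection onto the convex set $\Lambda$ is non-expansive,
\[
\|\hlb_{t+1} - \lambdab\|_2^2 \le \|\hlb_t - \eta_t\gb_t - \lambdab\|_2^2 = \|\hlb_t - \lambdab\|_2^2 - 2\eta_t\,\gb_t^\top(\hlb_t - \lambdab) + \eta_t^2\|\gb_t\|_2^2 ,
\]
which rearranges to $\gb_t^\top(\hlb_t - \lambdab) \le \frac{1}{2\eta_t}\big(\|\hlb_t-\lambdab\|_2^2 - \|\hlb_{t+1}-\lambdab\|_2^2\big) + \frac{\eta_t}{2}\|\gb_t\|_2^2$. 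Combining the two displays and using $\|\gb_t\|_2\le G$ gives the basic one-round bound that I would then sum over $t=1,\dots,T$.

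For the convex case ($\kappa=0$) with constant step $\eta_t=\eta$, the bracketed terms telescope to $\frac{1}{2\eta}\|\hlb_1-\lambdab\|_2^2 \le \frac{2R_\lambda^2}{\eta}$ using $\|\hlb_1\|_2,\|\lambdab\|_2\le R_\lambda$ from \textbf{(A2)}, while the remainder contributes $\frac{\eta G^2 T}{2}$; dividing by $T$ and setting $\eta = \frac{2R_\lambda}{G\sqrt T}$ balances the two halves and yields $\Rcal_T = \frac{2R_\lambda G}{\sqrt T}$. For the strongly convex case ($\kappa>0$) with $\eta_t=\frac{1}{\kappa t}$, the summed terms of the form $\big(\frac{1}{2\eta_t}-\frac{\kappa}{2}\big)\|\hlb_t-\lambdab\|_2^2 - \frac{1}{2\eta_t}\|\hlb_{t+1}-\lambdab\|_2^2$ become, after substituting $\frac{1}{\eta_t}=\kappa t$, a weighted telescoping sum; the main (though still elementary) bit of care is the index shift showing this collapses to $-\frac{\kappa T}{2}\|\hlb_{T+1}-\lambdab\|_2^2 \le 0$. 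What survives is $\frac{G^2}{2}\sum_{t=1}^T\frac{1}{\kappa t} \le \frac{G^2}{2\kappa}(1+\log T) = \frac{G^2}{2\kappa}\log(eT)$, and dividing by $T$ gives $\Rcal_T = \frac{G^2}{2\kappa T}\log(eT)$. No step is conceptually hard; the only place demanding attention is the regrouping of the weighted telescoping sum in the strongly convex regime, together with remembering to invoke the diameter bound $\|\hlb_1-\lambdab\|_2 \le 2R_\lambda$ from the compactness of $\Lambda$.
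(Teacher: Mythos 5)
Your proof is correct and follows essentially the same route as the paper's: the standard online projected subgradient argument (per-round strong-convexity/convexity inequality plus non-expansiveness of the projection onto $\Lambda$, then telescoping with the stated step sizes), which the paper itself only sketches by citing \citet{Zinkevich2003} and \citet{Hazan2007}. Your bookkeeping in both the $\kappa=0$ and $\kappa>0$ regimes checks out and reproduces the advertised values of $\Rcal_T$.
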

The two claims can be found respectively in \citep{Zinkevich2003,Hazan2007}, a short proof is given in the Appendix for self-containedness. 
For the convex case, the step size $\eta_t$ depends on the time horizon $T$, and we can use the doubling trick (e.g., see Section 2.3.1 in~\cite{Shalev-Shwartz2011} to remove this dependency.

Having presented the component of the regret bound that comes from the use of OPSM, we next examine why the contribution of $\Rcal_T$ is not sufficient by itself to control the regret bound~(\ref{eq:informal_regret_bound}).
\subsection{Controlling the worst sequence of dual variables}\label{sec:error_smoothness}
Given Assumptions~\textbf{(A1)-(A3)}, there do exist some optimal primal variables $\{\xb_t^\star\}_{t=1}^T$ as defined in~(\ref{eq:argmax_offline}).
Let us denote an optimal dual variable given the primal variables $\{\xb_t^\star\}_{t=1}^T$ as
$$
\lambdab^\star \in \arg\! \min_{\lambdab \in \Lambda} \frac{1}{T}\sum_{t=1}^T \Lcal_t(\xb_t^\star,\lambdab),
$$
and define $\hlb$ equivalently for the sequence $\{\hxb_t\}_{t=1}^T$ where the existence of both $\lambdab^\star$ and $\hlb$ are guaranteed by Assumptions~\textbf{(A1)-(A2)}.

From the perspective of deriving an upper bound on $\Pcal^\star - \Pcal(\hxb_1,\dots,\hxb_T)$, we observe that $\Lcal_t(\xb_t^\star,\lambdab^\star) - \Lcal_t(\hxb_t,\hlb)$ can be rewritten as 
$$
(\Lcal_t(\xb_t^\star,\lambdab^\star) - \Lcal_t(\xb_t^\star,\hlb_t)) + (\Lcal_t(\xb_t^\star,\hlb_t) -  \Lcal_t(\hxb_t,\hlb)),
$$
which in turn, using \textbf{(A4)}, is upper-bounded by
\begin{equation}\label{eq:definition_gcal}
\hspace*{-0.2cm}\underbrace{(\Lcal_t(\xb_t^\star,\lambdab^\star) \!-\! \Lcal_t(\xb_t^\star,\hlb_t))}_{\defin \Gcal_t(\hlb_t)} \!+\! \underbrace{(\Lcal_t(\hxb_t,\hlb_t) \!-\!  \Lcal_t(\hxb_t,\hlb))}_{C},
\end{equation} where the second term above is readily obtained by the definition of $\hat{\mathbf{x}}_t \in \arg\!\max_{\xb_t\in\Xcal_t} {\cal L}_t(\mathbf{x},\hlb_t)$. This observation is useful in several respects: while the term $C$ can be directly bounded as a result of Lemma~\ref{lem:standard_static_regret}, the term $\Gcal_t(\hlb_t)$, which refers to the gap we incur as a result of the sequentially-generated $(\hlb_1,\dots,\hlb_T)$, is not accounted for by Lemma~\ref{lem:standard_static_regret}, and so it needs to be controlled differently. Notably, $\hlb_t~\mapsto~\Gcal_t(\hlb_t)$ is \textit{concave} over $\Lambda$, and so is $(\hlb_1,\dots,\hlb_T)~\mapsto~\frac{1}{T} \sum_{t=1}^T\Gcal_t(\hlb_t)$ over $\Lambda^T$: this implies that we can cast the problem of controlling the worst sequence of dual variables $(\hlb_1,\dots,\hlb_T)$ as a concave maximization problem.

In order to obtain a meaningful upper bound on $\Gcal_t(\hlb_t)$ and $\frac{1}{T} \sum_{t=1}^T\Gcal_t(\hlb_t)$, we leverage the fact that we are not dealing with \textit{any} general sequence $(\hlb_1,\dots,\hlb_T)\! \in \! \Lambda^T$, but sequences possibly output by~Algorithm~\ref{alg:with_known_At}. More precisely, we make this characterization by noting that OPSM generates successive $\hlb_t$ and $\hlb_{t+1}$ estimates
whose differences $\| \hlb_t -  \hlb_{t+1} \|_2$ are controlled as a function of the step size $\eta_t$~\cite{Andrew2013}. Following~\cite{Zinkevich2003}, we therefore introduce the convex set 
\begin{equation}\label{eq:set_lambda}
\hspace*{-0.25cm}\Lambda_{T, \varepsilon}\! \defin\! \bigg\{  \! (\lambdab_1,\dots,\lambdab_T) \in \Lambda^T \! : \! \sum_{t=1}^{T-1} \|  \lambdab_t - \lambdab_{t+1}  \|_2\!  \leq \! \varepsilon \! \bigg\},
\end{equation} which will be useful shortly in deriving an upper bound for $\Scal_e$. In particular, assuming \textbf{(A5)} holds and setting $\varepsilon = G \sum_{t=1}^T \eta_t$, for any sequence $(\hlb_1,\dots,\hlb_T)\! \in \! \Lambda^T$ generated by~Algorithm~\ref{alg:with_known_At}, we have $(\hlb_1,\dots,\hlb_T)\! \in \! \Lambda_{T, \varepsilon}$ and
\begin{equation}\label{eq:definition_scal_e}
\frac{1}{T} \sum_{t=1}^T\Gcal_t(\hlb_t) \leq 
\Scal_{\eb} \defin \!\!\!\!
\max_{(\lambdab_1,\dots,\lambdab_T) \in \Lambda_{T, \varepsilon}} \frac{1}{T} \sum_{t=1}^T \Gcal_t(\lambdab_t).
\end{equation}
We next turn to some lemmas that make the expression of $\Scal_\eb$ more explicit by leveraging duality arguments. To this end, we introduce some additional notation: first, we rewrite the total-variation constraint from~(\ref{eq:set_lambda}) as
$$
\sum_{t=1}^{T-1} \|  \lambdab_t - \lambdab_{t+1}  \|_2 = \Omega_{1/2}(  (  \Deltab \otimes \Ib ) \underline{\lambdab}  )
$$
where $\underline{\lambdab} \in \Real^{T\cdot m}$ stands for the vector formed by stacking the $m$-dimensional $\lambdab_t$'s, $\otimes$ is the Kronecker product, $\Deltab \in \Real^{(T-1)\times T}$ is the discrete 1-dimensional gradient matrix,\footnote{The matrix $\Deltab \in \Real^{(T-1)\times T}$ contains two non-zero entries per row, with $\Delta_{t,t} =1$ and $\Delta_{t,t+1} =-1$ for $t\in\Int{T-1}$.} while $\Omega_{1/2}$ is the $\ell_1/\ell_2$ norm with dual norm $\Omega_{\infty/2}$, such that
$$
\Omega_{1/2}(\ub) \defin\!\ \! \sum_{t=1}^{T-1} \|\ub_t\|_2\ \ \text{ and }\ \ \Omega_{\infty/2}(\vb) \defin\!\!\! \!\!\max_{t\in\Int{T-1}}\!\! \|\vb_t\|_2,
$$
for any vectors $\ub\! =\! [\ub_1,\dots,\ub_{T-1}]$ and $\vb \!=\! [\vb_1,\dots,\vb_{T-1}]$ in $\Real^{(T-1)\cdot m}$. We now show that $\Scal_\eb$ can conveniently be expressed as a minimization problem.
\begin{lemma}\label{lem:scal_e_as_min} Let $\eb^\star_t = \Ab_t \xb_t^\star - \bb_t$. It holds that
$$
\Scal_\eb
=\!\!\! 
\min_{\alphab \in \Real^{(T-1)\cdot m}}\!\!\!\!\! \Scal_\eb(\alphab)
$$
where we have introduced the convex function $\alphab \mapsto \Scal_\eb(\alphab)$
$$ 
\frac{1}{T}\!\! \sum_{t=1}^T
\Ecal \Big(\! 
\eb^\star_t - \big[  (  \Deltab \otimes \Ib )^\top \alphab  \big]_t
\!\Big)
- 
\Ecal \Big( 
\frac{1}{T} \sum_{t=1}^T \eb^\star_t 
\Big)
+
\frac{\varepsilon}{T} \Omega_{\infty/2}(\alphab).
$$
\end{lemma}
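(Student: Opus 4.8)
The plan is to convert $\Scal_\eb$, which~(\ref{eq:definition_scal_e}) defines as a \emph{concave maximization} over dual sequences constrained to $\Lambda_{T,\varepsilon}$, into the advertised \emph{convex minimization} over $\alphab$ in two moves: (i) rewrite the inner objective in closed form using the explicit structure of $\Lcal_t$ in~(\ref{eq:lag}), so that the problem becomes a separable convex program subject to the single total-variation constraint $\Omega_{1/2}((\Deltab\otimes\Ib)\underline{\lambdab})\le\varepsilon$; (ii) apply Fenchel--Rockafellar duality to that program, with $\alphab$ playing the role of the multiplier attached to the linear map $\Deltab\otimes\Ib$.

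\emph{Step 1 (eliminating the rewards).} Since $\Lcal_t(\xb_t^\star,\lambdab)=f_t(\xb_t^\star)-\lambdab^\top\eb_t^\star+\Ecal^\star(\lambdab)$ and $\lambdab^\star$ minimizes $\lambdab\mapsto\frac1T\sum_t\Lcal_t(\xb_t^\star,\lambdab)$ over $\Lambda$, the identity $\min_{\lambdab\in\Lambda}\frac1T\sum_t\Lcal_t(\xb_t,\lambdab)=\Pcal(\xb_1,\dots,\xb_T)$ from Section~\ref{sec:lagrangian} gives $\frac1T\sum_t\Lcal_t(\xb_t^\star,\lambdab^\star)=\Pcal^\star$. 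Expanding $\Gcal_t(\lambdab_t)=\Lcal_t(\xb_t^\star,\lambdab^\star)-\Lcal_t(\xb_t^\star,\lambdab_t)$ and substituting the definition of $\Pcal^\star$, the terms $f_t(\xb_t^\star)$ cancel and one obtains $\frac1T\sum_t\Gcal_t(\lambdab_t)=-\Ecal(\frac1T\sum_t\eb_t^\star)+\frac1T\sum_t\big(\lambdab_t^\top\eb_t^\star-\Ecal^\star(\lambdab_t)\big)$, whence $\Scal_\eb=-\Ecal(\frac1T\sum_t\eb_t^\star)+V$ with $V\defin\sup_{\underline{\lambdab}\in\Lambda_{T,\varepsilon}}\frac1T\sum_t(\lambdab_t^\top\eb_t^\star-\Ecal^\star(\lambdab_t))$. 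Because $\Ecal^\star\equiv+\infty$ outside $\Lambda=\dom\Ecal^\star$, the per-round membership constraints $\lambdab_t\in\Lambda$ are enforced by the objective itself, so $V$ equals the supremum of the same functional over $\{\underline{\lambdab}\in\Real^{T\cdot m}:\Omega_{1/2}((\Deltab\otimes\Ib)\underline{\lambdab})\le\varepsilon\}$; moreover $V$ is finite and attained, since $\Lambda_{T,\varepsilon}$ is compact (the intersection of $\Lambda^T$, compact by \textbf{(A2)}, with a closed ball) and the functional is upper semicontinuous.

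\emph{Step 2 (duality).} Write $V=-\inf_{\underline{\lambdab}}\{h(\underline{\lambdab})+\iota_\varepsilon((\Deltab\otimes\Ib)\underline{\lambdab})\}$, where $h(\underline{\lambdab})\defin\frac1T\sum_t(\Ecal^\star(\lambdab_t)-\lambdab_t^\top\eb_t^\star)$ is separable over $t$ and $\iota_\varepsilon$ is the indicator of the $\Omega_{1/2}$-ball of radius $\varepsilon$. Applying Fenchel--Rockafellar duality with the linear map $\Deltab\otimes\Ib$ and dual variable $\alphab\in\Real^{(T-1)\cdot m}$ requires two conjugate computations: $\iota_\varepsilon^\star=\varepsilon\,\Omega_{\infty/2}$, the support function of the ball, with $\Omega_{\infty/2}$ the dual norm of $\Omega_{1/2}$; and, blockwise, $(\tfrac1T\Ecal^\star)^\star(\yb)=\tfrac1T\Ecal(T\yb)$ — valid because $(\Ecal^\star)^\star=\Ecal$ under \textbf{(A1)} and the standing closedness convention — which, after absorbing the linear shift $\eb_t^\star$, yields $h^\star(\vb)=\frac1T\sum_t\Ecal(T\vb_t+\eb_t^\star)$. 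This gives $V=\min_\alphab\{\frac1T\sum_t\Ecal(T[(\Deltab\otimes\Ib)^\top\alphab]_t+\eb_t^\star)+\varepsilon\,\Omega_{\infty/2}(\alphab)\}$, and the substitution $\alphab\leftarrow-\alphab/T$, using positive homogeneity and evenness of $\Omega_{\infty/2}$, rewrites the right-hand side as $\min_\alphab\{\Scal_\eb(\alphab)+\Ecal(\frac1T\sum_t\eb_t^\star)\}$. Combined with Step~1 this is exactly $\Scal_\eb=\min_\alphab\Scal_\eb(\alphab)$; convexity of $\alphab\mapsto\Scal_\eb(\alphab)$ is immediate from convexity of $\Ecal$, linearity of $\alphab\mapsto(\Deltab\otimes\Ib)^\top\alphab$, and $\Omega_{\infty/2}$ being a norm.

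\emph{Main obstacle.} The substantive point is justifying strong duality \emph{together with attainment} on the dual side, so that $\min$ (not merely $\inf$) is legitimate; this calls for a constraint qualification in the Fenchel--Rockafellar theorem, namely that some feasible $\underline{\lambdab}$ has $\lambdab_t$ in $\mathrm{ri}(\Lambda)$ for every $t$ while $(\Deltab\otimes\Ib)\underline{\lambdab}$ lies in the relative interior of the $\Omega_{1/2}$-ball — satisfied by the constant sequence $\lambdab_t\equiv\lambdab_0\in\mathrm{ri}(\Lambda)$, for which $(\Deltab\otimes\Ib)\underline{\lambdab}=0$. Everything else — the cancellation of the $f_t$ terms, the conjugate of a separable sum, the $1/T$ rescalings, and the sign/transpose bookkeeping when the generic duality identity is specialized to the Kronecker-structured operator — is routine.
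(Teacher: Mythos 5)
Your proposal is correct and follows essentially the same route as the paper's proof: reduce $\Gcal_t$ to the linear-minus-conjugate form (the $f_t(\xb_t^\star)$ terms cancel and $\lambdab^\star$ contributes $\Ecal(\frac1T\sum_t\eb_t^\star)$ by biconjugacy), then dualize the total-variation-constrained concave maximization, identifying $\varepsilon\,\Omega_{\infty/2}$ as the conjugate of the ball constraint and recovering $\Ecal$ from $\Ecal^\star$, with a rescaling of $\alphab$ by $T$ at the end. The only cosmetic difference is that you invoke Fenchel--Rockafellar duality with a relative-interior constraint qualification (verified by a constant dual sequence), whereas the paper introduces the auxiliary variable $\ab=(\Deltab\otimes\Ib)\underline{\lambdab}$, writes the Lagrangian with multipliers $(\alphab,\beta)$, minimizes out $\beta$, and justifies strong duality via Slater -- the same argument in equivalent form.
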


The proof is given in the Appendix and relies on duality arguments. We note at this juncture that our analysis can similarly handle other penalties related to the total-variation chosen in~(\ref{eq:set_lambda}), for instance $\|  (  \Deltab \otimes \Ib ) \underline{\lambdab}  \|_\fro^2$ (we omit the details owing to space limitations).
\par
The expression for $\Scal_\eb$ provided by Lemma~\ref{lem:scal_e_as_min} (the result of a \textit{minimization} problem), makes it possible to obtain an upper bound for any candidate vector $\alphab \in \Real^{(T-1)\cdot m}$. We propose below one such instantiation that highlights how $\Scal_\eb$ depends on the smoothness with which the sequence $\{\eb_{t}^\star\}_{t=1}^T$ varies over time:

\begin{lemma}\label{lem:upperbound_scal_e}
Let $\eb^\star_t = \Ab_t \xb_t^\star - \bb_t$ be stacked in the vector $\underline{\be}^\star \in \Real^{Tm}$.
The term $\Scal_\eb$ is upper-bounded by
\[\cS_\be \leq \frac{\epsilon}{T} \max_{t \in \{1, \dots, T-1\}} \Psi_t(\underline{\be}^\star)
\quad
\text{where}
\quad
\Psi_t(\underline{\be}^\star) \defin \bigg\| \sum_{j = 1}^t \frac{T-t}{T} \be_j^\star - \sum_{j = t+1}^T \frac{t}{T} \be_j^\star \bigg\|_2.
 \]
\end{lemma}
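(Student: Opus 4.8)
The plan is to exploit the variational identity $\Scal_\eb = \min_{\alphab \in \Real^{(T-1)\cdot m}} \Scal_\eb(\alphab)$ from Lemma~\ref{lem:scal_e_as_min}: since this is a minimum, \emph{any} admissible choice of $\alphab$ furnishes an upper bound $\Scal_\eb \le \Scal_\eb(\alphab)$. I will pick $\alphab$ so that the two $\Ecal$-terms in $\Scal_\eb(\alphab)$ cancel exactly, leaving only the total-variation term $\frac{\varepsilon}{T}\Omega_{\infty/2}(\alphab)$, and then evaluate $\Omega_{\infty/2}$ of that particular $\alphab$.

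Write $\bar{\be}^\star \defin \frac1T \sum_{j=1}^T \be_j^\star$. To make the first sum in $\Scal_\eb(\alphab)$ collapse to $\Ecal(\bar{\be}^\star)$ I want $\be_t^\star - \big[ (\Deltab\otimes\Ib)^\top \alphab \big]_t = \bar{\be}^\star$ for every $t \in \Int{T}$, i.e. $\big[ (\Deltab\otimes\Ib)^\top \alphab \big]_t = \be_t^\star - \bar{\be}^\star$. Using the definition of $\Deltab$, the operator $(\Deltab\otimes\Ib)^\top = \Deltab^\top\otimes\Ib$ sends $\alphab = (\alphab_1,\dots,\alphab_{T-1})$ to the stacked vector whose block $1$ is $\alphab_1$, whose block $t$ is $\alphab_t - \alphab_{t-1}$ for $2 \le t \le T-1$, and whose block $T$ is $-\alphab_{T-1}$. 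This system telescopes to the unique solution $\alphab_t = \sum_{j=1}^t (\be_j^\star - \bar{\be}^\star)$ for $t \in \Int{T-1}$; the block-$T$ equation $-\alphab_{T-1} = \be_T^\star - \bar{\be}^\star$ is then automatically satisfied because $\sum_{j=1}^T(\be_j^\star - \bar{\be}^\star) = \zerob$, so the target vector does lie in the range of $(\Deltab\otimes\Ib)^\top$ and the choice is admissible.

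For this $\alphab$, every argument of $\Ecal$ in the first sum equals $\bar{\be}^\star$, so $\frac1T\sum_{t=1}^T \Ecal(\bar{\be}^\star) - \Ecal(\bar{\be}^\star) = 0$; this cancellation is legitimate because $\bar{\be}^\star \in \dom(\Ecal)$ (indeed $\Pcal^\star = \Pcal(\xb_1^\star,\dots,\xb_T^\star)$ is finite, hence $\Ecal(\bar{\be}^\star) = \Ecal\big(\frac1T\sum_t \Ab_t\xb_t^\star - \bb_t\big) < +\infty$). It remains to compute $\Omega_{\infty/2}(\alphab) = \max_{t\in\Int{T-1}} \|\alphab_t\|_2$. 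Rewriting $\alphab_t = \sum_{j=1}^t \be_j^\star - \frac{t}{T}\sum_{j=1}^T\be_j^\star = \frac{T-t}{T}\sum_{j=1}^t \be_j^\star - \frac{t}{T}\sum_{j=t+1}^T \be_j^\star$ gives $\|\alphab_t\|_2 = \Psi_t(\underline{\be}^\star)$, whence $\Scal_\eb \le \Scal_\eb(\alphab) = \frac{\varepsilon}{T}\max_{t\in\Int{T-1}}\Psi_t(\underline{\be}^\star)$, which is the claim.

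The argument is entirely elementary; the only points needing care are the bookkeeping for the transpose operator $(\Deltab\otimes\Ib)^\top$ and checking that the telescoping linear system is consistent — precisely the discrete zero-mean condition $\sum_j(\be_j^\star - \bar{\be}^\star) = \zerob$ — together with the minor observation that $\bar{\be}^\star \in \dom(\Ecal)$, so that subtracting the two finite copies of $\Ecal(\bar{\be}^\star)$ is valid. I do not anticipate any genuine obstacle.
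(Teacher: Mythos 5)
Your proof is correct and follows essentially the same route as the paper: both plug into the minimization form of Lemma~\ref{lem:scal_e_as_min} the very same candidate $\alphab$ (blocks $\frac{T-t}{T}\sum_{j\le t}\be_j^\star - \frac{t}{T}\sum_{j>t}\be_j^\star$), so that the two $\Ecal$-terms cancel and only $\frac{\varepsilon}{T}\Omega_{\infty/2}(\alphab)$ remains. The only difference is cosmetic: you obtain this $\alphab$ by telescoping the linear system $[(\Deltab\otimes\Ib)^\top\alphab]_t = \be_t^\star - \bar{\be}^\star$ and checking consistency via the zero-mean condition (plus the harmless extra remark that $\bar{\be}^\star\in\dom(\Ecal)$), whereas the paper computes the same vector explicitly as $[((\Deltab\Deltab^\top)^{-1}\Deltab)\otimes\Ib]\underline{\be}^\star$ using the closed-form inverse of the tridiagonal matrix $\Deltab\Deltab^\top$.
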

The proof of this lemma can be found in the Appendix. 
We can see from Lemma~\ref{lem:upperbound_scal_e} that $\Scal_\eb$ captures both (a) the cumulative variations of $\{\eb_t^\star\}_{t=1}^T$ through $\Psi_t(\underline{\be}^\star)$, modulated by the average of the step sizes $\varepsilon/T = G\sum_{t=1}^T \eta_t/T$, and (b) the worst of the constraint violations that surfaces via $G$ (see the definition of Assumption \textbf{(A5)}).

To intuitively understand the effect of $\Psi_t$, we can first observe that if the residual vectors $\{\eb_t^\star\}_{t=1}^T$ are constant over time, that is $\underline{\be}^\star = c\cdot \oneb$ for some scalar $c$, then the terms $\Psi_t(\underline{\be}^\star) $ vanish for all $t \in \{1,\dots,T\}$, so that $\Scal_\eb$ does not contribute to the regret guarantees in this case.
To get a better sense of the impact of $\Psi_t$ beyond the case where the residual vectors are perfectly constant, we now assume that  $\{\eb_t^\star\}_{t=1}^T$ are independent random (sub-Gaussian) vectors. It is important to stress the fact that our analysis and our main theorem (see Theorem~\ref{thm:main}) hold in absence of any stochastic assumptions, but we only momentarily consider random residuals in order to gain insight into how the term $\Scal_\eb$ can scale in more realistic scenarios beyond the case of constant residual vectors: 
\begin{lemma} \label{lem:concentration_bound}
Let $\underline{\be}^\star \in \Real^{Tm}$ be a random vector, such that for some $\bmu \in \R^{mT}$ and $\sigma \geq 0$ \begin{equation} \label{eq:assumption_randome}
\Exp [\exp (\bal^\top (\bx - \bmu))] \leq \exp (\|\bal\|^2 \sigma^2/2)
\end{equation} holds for every $\bal \in \R^{m T}$ . Let $Z_t \in \R_+$ and $\omega_{\bmu} \in \R_+$ be defined by 
\begin{equation} \label{def:Zt}
Z_t \eqdef \Psi_t(\underline{\be}^\star) 
\qquad
\mbox{and} 
\qquad 
\omega_{\bmu} \eqdef \max_{t \in \{1, \dots, T-1\}} [\Psi_t(\mub)]^2.
\end{equation} Then we have \[ \Exp \left[ \max_{t \in \{1, \dots, T-1\}} Z_t  \right] \leq \sqrt{\sigma^2 m T + \omega_{\bmu}} + \sqrt{\sigma^2 m T \log(T)}. \]
\end{lemma}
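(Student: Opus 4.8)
\medskip
\noindent\textit{Proof plan.} I would first recast the statistic as a norm of a linear image of $\underline{\be}^\star$. Writing $\rho_t \in \R^T$ for the vector with $(\rho_t)_j = (T-t)/T$ for $j \le t$ and $(\rho_t)_j = -t/T$ for $j > t$, and $M_t \defin \rho_t^\top \otimes \Ib_m$, one has $Z_t = \Psi_t(\underline{\be}^\star) = \| M_t\,\underline{\be}^\star \|_2$, with $\|\rho_t\|_2^2 = t(T-t)/T \le T/4$ and $\| M_t \|_\fro^2 = m\|\rho_t\|_2^2$. Differentiating~(\ref{eq:assumption_randome}) at the origin gives $\Exp[\underline{\be}^\star] = \bmu$ and $\Exp[(\underline{\be}^\star - \bmu)(\underline{\be}^\star - \bmu)^\top] \preceq \sigma^2\Ib$; hence the cross term in $\Exp[Z_t^2] = \Exp[\| M_t\bmu + M_t(\underline{\be}^\star - \bmu)\|_2^2]$ vanishes, and since $\|M_t\bmu\|_2 = \Psi_t(\bmu) \le \sqrt{\omega_{\bmu}}$ and $\trace(M_t^\top M_t B) \le \sigma^2\trace(M_t^\top M_t)$ whenever $0 \preceq B \preceq \sigma^2\Ib$, I obtain the basic moment bound $\Exp[Z_t^2] \le \omega_{\bmu} + \sigma^2 m\|\rho_t\|_2^2 \le \omega_{\bmu} + \sigma^2 m T$ for every $t$.

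\medskip
Next I would split the expected maximum into a ``typical scale'' and a ``fluctuation''. By Jensen and $\max_t Z_t^2 \le \max_t\Exp[Z_t^2] + \max_t(Z_t^2 - \Exp[Z_t^2])$, together with $\sqrt{a+b}\le\sqrt{a}+\sqrt{b}$ and the moment bound,
\[
\Exp\big[\,\max_t Z_t\,\big] \;\le\; \big(\Exp\big[\,\max_t Z_t^2\,\big]\big)^{1/2} \;\le\; \sqrt{\,\sigma^2 m T + \omega_{\bmu}\,}\;+\;\sqrt{\,\Exp\big[\,\max_t (Z_t^2 - \Exp[Z_t^2])\,\big]\,}.
\]
The first term is already the first term of the claim, so it remains to show $\Exp[\max_t (Z_t^2 - \Exp[Z_t^2])] \lesssim \sigma^2 m T \log T$.

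\medskip
For this fluctuation term I would establish that each $Z_t^2 = \|M_t\underline{\be}^\star\|_2^2$ is sub-exponential. Since~(\ref{eq:assumption_randome}) only bounds moment generating functions of \emph{linear} functionals, I would linearize the square with the Gaussian identity $\exp(\tfrac12\|v\|_2^2) = \Exp_{\gb\sim N(0,\Ib_m)}[\exp(\gb^\top v)]$: for small $\lambda > 0$, Tonelli followed by~(\ref{eq:assumption_randome}) with $\bal = \sqrt{2\lambda}\,M_t^\top\gb$ (and the identities $M_t^\top\gb = \rho_t\otimes\gb$, $\|M_t^\top\gb\|_2^2 = \|\rho_t\|_2^2\|\gb\|_2^2$, $\langle M_t^\top\gb,\bmu\rangle = \langle\gb, M_t\bmu\rangle$) gives
\[
\Exp\big[\exp(\lambda Z_t^2)\big] \;\le\; \Exp_{\gb}\!\Big[\exp\!\big(\sqrt{2\lambda}\,\langle\gb, M_t\bmu\rangle + \lambda\sigma^2\|\rho_t\|_2^2\|\gb\|_2^2\big)\Big] \;=\; (1 - 2\lambda\sigma^2\|\rho_t\|_2^2)^{-m/2}\exp\!\Big(\frac{\lambda\,\Psi_t(\bmu)^2}{1 - 2\lambda\sigma^2\|\rho_t\|_2^2}\Big),
\]
valid for $\lambda < (2\sigma^2\|\rho_t\|_2^2)^{-1}$, the last step being the standard Gaussian--$\chi^2$ integral. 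This right-hand side equals $1$ at $\lambda = 0$ with derivative $B_t \defin m\sigma^2\|\rho_t\|_2^2 + \Psi_t(\bmu)^2$ there, so expanding to second order shows $Z_t^2 - B_t$ is sub-exponential with variance proxy $O\!\big(\sigma^2\|\rho_t\|_2^2(\sigma^2 m\|\rho_t\|_2^2 + \Psi_t(\bmu)^2)\big) = O(\sigma^2 T(\sigma^2 m T + \omega_{\bmu}))$ and scale $O(\sigma^2 T)$. A standard maximal inequality over the $T-1$ indices then bounds $\Exp[\max_t(Z_t^2 - B_t)]$ by $O\!\big(\sqrt{\sigma^2 T(\sigma^2 m T + \omega_{\bmu})\log T} + \sigma^2 T\log T\big)$; since $0 \le B_t - \Exp[Z_t^2] \le m\sigma^2\|\rho_t\|_2^2 \le \tfrac14\sigma^2 m T$, the same bound holds for $\Exp[\max_t(Z_t^2 - \Exp[Z_t^2])]$ up to constants, and feeding it into the display (using $\sqrt{a+b}\le\sqrt a+\sqrt b$ once more and $\|\rho_t\|_2^2 \le T/4$ to absorb absolute constants) collapses the fluctuation contribution to $\sqrt{\sigma^2 m T\log T}$.

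\medskip
The crux is the fluctuation step: because~(\ref{eq:assumption_randome}) is a bare MGF assumption on linear functionals, no Lipschitz-concentration principle applies to the nonlinear statistic $Z_t$ directly, and one must route the control through the sub-exponential bound on the quadratic $Z_t^2$ obtained from the Gaussian-integral trick; keeping the variance proxy tight --- in particular using $\|\rho_t\|_2^2 = t(T-t)/T$ rather than the crude $T$ --- is what makes the bound land on exactly $\sqrt{\sigma^2 m T\log T}$. If one additionally assumes (as in the surrounding discussion) that the blocks $\{\be_t^\star\}_{t=1}^T$ are independent, this step is easier, since $Z_t$ is then a convex $\|\rho_t\|_2$-Lipschitz function of independent sub-Gaussian vectors and hence concentrates sub-Gaussianly around its mean with proxy variance $\sigma^2\|\rho_t\|_2^2$, after which a sub-Gaussian maximal inequality over $t$ gives the fluctuation bound directly.
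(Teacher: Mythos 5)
Your setup is sound and in fact parallels the paper's: you write $Z_t=\|M_t\underline{\be}^\star\|_2$ for a structured matrix $M_t$ (the paper's $\bW_t$), and your Gaussian-linearization step $\exp(\lambda\|v\|_2^2/ \cdot)=\Exp_{\gb}[\exp(\sqrt{2\lambda}\,\gb^\top v)]$ combined with Tonelli and the sub-Gaussian assumption is a clean self-contained rederivation of exactly the quadratic-form MGF bound that the paper imports from the literature (its inequality \eqref{eq:subgaussian_quadratic}); your preliminary bound $\Exp[Z_t^2]\leq \omega_{\bmu}+\sigma^2 m\|\rho_t\|_2^2$ is also correct. The divergence, and the gap, is in how you extract the expected maximum. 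The paper does \emph{not} center: it bounds $\Exp[\max_t Z_t]\leq\sqrt{\log\bigl(T\max_t\Exp[e^{s Z_t^2}]\bigr)/s}$ and then tunes a single $s^\star$ so that the algebra lands exactly on $\sqrt{\sigma^2 mT+\omega_{\bmu}}+\sqrt{\sigma^2 mT\log T}$ with constant $1$ on each term. Your route --- center $Z_t^2$ at its mean, argue sub-exponentiality of the fluctuation, and apply a maximal inequality --- cannot deliver the lemma as stated. Concretely, your own fluctuation estimate is $\Exp[\max_t(Z_t^2-\Exp Z_t^2)]=\cO\bigl(\sqrt{\sigma^2 T(\sigma^2 mT+\omega_{\bmu})\log T}+\sigma^2 T\log T\bigr)$, whose square root contains the cross term $(\sigma^2 T\log T)^{1/4}(\sigma^2 mT+\omega_{\bmu})^{1/4}$. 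This does \emph{not} ``collapse to $\sqrt{\sigma^2 mT\log T}$'' as you claim: since $\omega_{\bmu}$ is unconstrained (the mean can be arbitrary), that term is not dominated by $\sqrt{\sigma^2 mT\log T}$; the best you can do is split it by AM--GM into $\tfrac12\sqrt{\sigma^2 T\log T}+\tfrac12\sqrt{\sigma^2 mT+\omega_{\bmu}}$, which recovers the right \emph{shape} of the bound but only up to absolute constants strictly larger than $1$, i.e.\ a weaker statement than the lemma.

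Two further points. First, the steps ``expanding to second order shows $Z_t^2-B_t$ is sub-exponential with variance proxy $\dots$'' and ``a standard maximal inequality'' are asserted rather than carried out; they are fillable (your MGF formula does give a Bernstein-type bound after taking logarithms), but the constant loss above is structural to the centered decomposition, not to these omissions. Second, the fix is short: keep your derived MGF bound $\Exp[e^{\lambda Z_t^2}]\leq(1-2\lambda\sigma^2\|\rho_t\|_2^2)^{-m/2}\exp\bigl(\lambda\Psi_t(\bmu)^2/(1-2\lambda\sigma^2\|\rho_t\|_2^2)\bigr)$ (which is actually slightly sharper than the trace-based bound the paper uses), but then follow the uncentered route: bound $\Exp[\max_t Z_t]\leq\sqrt{\Exp[\max_t Z_t^2]}\leq\sqrt{\log\bigl(T\max_t\Exp[e^{\lambda Z_t^2}]\bigr)/\lambda}$, insert $\|\rho_t\|_2^2\leq T/4$ and $\Psi_t(\bmu)^2\leq\omega_{\bmu}$, and optimize $\lambda$ as the paper does with its $s^\star$; only then do the two terms of the claimed bound come out with constant $1$. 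Your closing remark about the independent-blocks case is a reasonable aside but is not available here, since the lemma assumes only the joint sub-Gaussian MGF condition.
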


The proof of this lemma is given in the Appendix. Some comments are in order. Lemma~\ref{lem:concentration_bound} shows that when $\underline{\be}^\star \in \Real^{Tm}$ is a random (sub-Gaussian) vector---which notably covers the cases where the entries of $\underline{\be}^\star$ are independent bounded, or Gaussian, random variables---then, the term $\Scal_\eb$ scales (in expectation) as
$$
\Ocal
\Big(
\frac{\varepsilon}{T} 
\Big[
\sqrt{\sigma^2 m T + \omega_{\bmu}} + \sqrt{\sigma^2 m T \log(T)}
\Big]
\Big).
$$
Assuming that $\omega_{\bmu} = \Ocal(\sigma^2 m T \log(T))$, we can see that it is sufficient to have $\varepsilon =o(\sqrt{T/(\sigma^2 m \log(T))})$ in order to guarantee that $\Scal_\eb$ vanishes. This condition is for instance satisfied in the setting where $\Ecal$ has Lipschitz continuous gradients with parameter $L > 0$, for which we can take $\eta_t = L/t$ and $\varepsilon = \Ocal(G\log(T))$.
Interestingly, the constant term $\mub$ appears only through $\omega_{\bmu}$, and hence $\Psi_t(\mub)$, so that, as discussed previously, a constant (even non-zero) vector $\mub$ leads to $\omega_{\bmu}=0$. We therefore see that $\omega_\mub$ penalizes by how much we deviate from a constant mean vector.

Having derived the terms ${\cal R}_T$ and $\Scal_e$ in the cumulative regret bound, we now turn to the description of the last term $\Scal_\Ab$ of our regret guarantee~(\ref{eq:informal_regret_bound}).

\subsection{Estimating the matrices $\{\Ab_t\}_{t=1}^T$}\label{sec:estimate_A}
So far we have assumed that we have access to $\mathbf{A}_t$ at each round, when in practice we only get to observe $\mathbf{A}_t$ \textit{after} we take an action $\hxb_t$. For example, in the online ad allocation problem, the amount of money to be charged to the advertiser for a single ad impression is revealed only after we have made a decision on which ad to allocate for the impression, i.e., once we have computed $\hat{\mathbf{x}}_t$. We now address the cost incurred in having to estimate the constraint matrices $\Ab_t$ before playing $\hat{\mathbf{x}}_t$. To this end, we assume that in addition to the dual variables being bounded with radius $R_{\lambda}$, we have 

~

\textbf{(A6)} The set $\Acal$ is convex and bounded; in particular, there exists $R_A \defin \max_{\Ab \in \Acal} \|\Ab\|_\fro < +\infty$.

~

\noindent Since the choice of $\hxb_t$ now depends on the estimate $\hAb_t$, we introduce the estimated Lagrangian
\begin{equation}\label{eq:hatlcal}
\hat{\Lcal}_t(\xb_t,\lambdab) \defin
f_t(\xb_t) - \lambdab^\top(\hAb_t \xb_t - \bb_t) + \Ecal^\star(\lambdab),
\end{equation}
with the direct relationship (for any $\xb_t \in \Xcal_t, \lambdab \in \Lambda$):
\begin{equation}\label{eq:lcal_hatlcal}
\hat{\Lcal}_t(\xb_t,\lambdab) = \Lcal_t(\xb_t,\lambdab) - \lambdab^\top(\hAb_t - \Ab_t )\xb_t.
\end{equation}
Combining the above expression~(\ref{eq:lcal_hatlcal}) along with the decomposition detailed in~(\ref{eq:definition_gcal}), it can be shown (see Lemma A in the Appendix) that the residual term due to the estimation of $\Ab_t$ is given by
 \begin{equation}\label{eq:definition_scal_A}
\Scal_\Ab \defin \frac{1}{T} \sum_{t=1}^T  \hlb_t^\top(\hAb_t - \Ab_t )(\xb_t^\star- \hxb_t).
\end{equation} As a result of having to estimate $\mathbf{A}_t$, we present Algorithm \ref{alg:with_unknown_At}, where we also apply an OPSM to estimate the constraint matrices $\Ab_t$'s.
\begin{algorithm}[t]
   \caption{Online saddle point optimization with estimated constraint matrices $\hAb_t$}
   \label{alg:with_unknown_At}
\begin{algorithmic}
   \STATE {\bfseries Input:} Initial $\hAb_1$ and dual variable $\hlb_1$, step sizes $\eta_t, \nu_t$
   \FOR{$t=1$ {\bfseries to} $T$}
   \STATE Receive $\hat{\Lcal}_t$   \texttt{//1-lookahead and $\hAb_t$}
   \STATE Compute $\hxb_t \in \arg\!\max_{\xb_t \in \Xcal_t}  \hat{\Lcal}_t(\xb_t, \hlb_t)$
   \STATE Receive $\Ab_t$
   \STATE Update $\hlb_{t+1} = \Pi_{\Lambda}[ \hlb_{t} - \eta_t \nabla_\lambdab \Lcal_t(\hxb_t, \hlb_t) ]$
   \STATE \texttt{//$\Gb_t$ a subgradient of $\| \Ab_t - \Ab \|_\fro$}
   \STATE Update $\hAb_{t+1} = \Pi_{\Acal}[ \hAb_{t} - \nu_t \Gb_t ]$
   \ENDFOR
\end{algorithmic}
\end{algorithm} 

Lemma \ref{lem:regret_At} then quantifies the cumulative regret incurred as a result of having to use estimated constraint matrices $\hAb_t$ to produce $\hxb_t$, where the result stems from the analysis of~\cite{Hall2013}.
\begin{lemma}\label{lem:regret_At} 
Consider Algorithm~\ref{alg:with_unknown_At}. For OPSM on matrices $\hat{\mathbf{A}}_t$ with step size $\nu_t = R_A/\sqrt{t}$, we have 
$$
 \frac{1}{T} \sum_{t=1}^T \| \hAb_t - \Ab_t \|_\fro
 \leq \frac{3}{\sqrt{T}} 
 \bigg[
 R_A + \sum_{t=1}^T \|  \Ab_t - \Ab_{t+1} \|_\fro 
  \bigg],
$$ and the term $\Scal_\Ab$ from the general regret bound~(\ref{eq:informal_regret_bound}), as defined in~(\ref{eq:definition_scal_A}), is upper-bounded by:
$$
\Scal_\Ab
 \leq \frac{6 R_\lambda R_x}{\sqrt{T}} 
 \bigg[
 R_A + \sum_{t=1}^T \|  \Ab_t - \Ab_{t+1} \|_\fro 
\bigg].
$$
\end{lemma}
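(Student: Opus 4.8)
The statement bundles two claims: a \emph{dynamic} regret bound for the online projected subgradient method (OPSM) run on the matrices $\hAb_t$, and a bound on $\Scal_\Ab$ that follows from it; the plan is to treat them in this order. For the first bound, I would view the update $\hAb_{t+1} = \Pi_\Acal[\hAb_t - \nu_t \Gb_t]$ as OPSM applied to the convex losses $g_t(\Ab) \defin \|\Ab_t - \Ab\|_\fro$ over the convex bounded set $\Acal$, measured against the \emph{moving} comparator sequence $\Ab_t$ itself --- which lies in $\Acal$ and satisfies $g_t(\Ab_t)=0$, so that $\tfrac{1}{T}\sum_{t=1}^T g_t(\hAb_t)$ is precisely the quantity to bound. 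Two elementary facts set things up: since $g_t$ is $1$-Lipschitz in the Frobenius norm, every $\Gb_t \in \partial g_t(\hAb_t)$ has $\|\Gb_t\|_\fro \le 1$; and by \textbf{(A6)} the set $\Acal$ has Frobenius diameter $D \le 2R_A$. The standard one-step inequality --- expand $\|\hAb_{t+1}-\Ab_t\|_\fro^2 \le \|\hAb_t - \nu_t\Gb_t - \Ab_t\|_\fro^2$ using non-expansiveness of $\Pi_\Acal$, combine with the subgradient inequality $g_t(\hAb_t) - g_t(\Ab_t) \le \langle \Gb_t, \hAb_t - \Ab_t\rangle$, invoke $\|\Gb_t\|_\fro\le 1$, and divide by $2\nu_t$ --- yields
\[
g_t(\hAb_t) \;\le\; \frac{1}{2\nu_t}\Big(\|\hAb_t - \Ab_t\|_\fro^2 - \|\hAb_{t+1} - \Ab_t\|_\fro^2\Big) + \frac{\nu_t}{2}.
\]
Summing over $t$, the delicate step is the telescoping with a \emph{time-varying comparator}: one reassociates $\|\hAb_{t+1}-\Ab_t\|_\fro^2$ with the $\|\hAb_{t+1}-\Ab_{t+1}\|_\fro^2$ term of round $t+1$, the mismatch being controlled by $2D\,\|\Ab_{t+1}-\Ab_t\|_\fro$, while the $1/\nu_t$ increments collapse (since $\nu_t$ decreases) into $\tfrac{D^2}{2\nu_T}$. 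Plugging in $\nu_t = R_A/\sqrt{t}$ gives $\tfrac{D^2}{2\nu_T}\le 2R_A\sqrt{T}$, $\tfrac{D}{\nu_T}\le 2\sqrt{T}$, and $\sum_{t=1}^T\tfrac{\nu_t}{2}\le R_A\sqrt{T}$ (from $\sum_{t=1}^T t^{-1/2}\le 2\sqrt{T}$); collecting constants yields $\tfrac{1}{T}\sum_{t=1}^T\|\hAb_t-\Ab_t\|_\fro \le \tfrac{3}{\sqrt{T}}\big(R_A + \sum_{t=1}^T\|\Ab_t-\Ab_{t+1}\|_\fro\big)$. This is exactly the dynamic regret analysis of \cite{Hall2013} specialized to $1$-Lipschitz losses on a bounded convex set, so one may instead simply invoke that result.

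For the second bound, I would start from the definition~(\ref{eq:definition_scal_A}), $\Scal_\Ab = \tfrac{1}{T}\sum_{t=1}^T \hlb_t^\top(\hAb_t-\Ab_t)(\xb_t^\star-\hxb_t)$, and apply Cauchy--Schwarz together with submultiplicativity of the Frobenius norm, $|\ub^\top\Mb\vb| \le \|\ub\|_2\,\|\Mb\|_\fro\,\|\vb\|_2$, to get
\[
\hlb_t^\top(\hAb_t-\Ab_t)(\xb_t^\star-\hxb_t) \;\le\; \|\hlb_t\|_2\,\|\hAb_t-\Ab_t\|_\fro\,\|\xb_t^\star-\hxb_t\|_2.
\]
Here $\|\hlb_t\|_2\le R_\lambda$ because $\hlb_t\in\Lambda$ is the output of $\Pi_\Lambda$ in Algorithm~\ref{alg:with_unknown_At} and \textbf{(A2)} holds, and $\|\xb_t^\star-\hxb_t\|_2\le 2R_x$ by the triangle inequality and \textbf{(A3)}. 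Hence $\Scal_\Ab \le \tfrac{2R_\lambda R_x}{T}\sum_{t=1}^T\|\hAb_t-\Ab_t\|_\fro$, and substituting the first bound gives $\Scal_\Ab \le \tfrac{6R_\lambda R_x}{\sqrt{T}}\big(R_A + \sum_{t=1}^T\|\Ab_t-\Ab_{t+1}\|_\fro\big)$, as claimed.

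The only genuine obstacle is the moving-comparator telescoping in the OPSM step: one must track how the decreasing step size $\nu_t$ interacts with the comparator drift $\sum_{t=1}^T\|\Ab_t-\Ab_{t+1}\|_\fro$ so that the constants collapse to the clean factor $3$. The subgradient inequality, the norm submultiplicativity, and the bound on $\sum_{t=1}^T\nu_t$ are all routine, and since the losses $g_t$ are $1$-Lipschitz over the convex bounded set $\Acal$, the hypotheses of \cite{Hall2013} are met verbatim.
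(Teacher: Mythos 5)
Your proposal is correct and follows essentially the same route as the paper: the bound on $\Scal_\Ab$ is obtained exactly as in the paper's proof (Cauchy--Schwarz, operator norm bounded by Frobenius norm, $\|\hlb_t\|_2\le R_\lambda$, $\|\xb_t^\star-\hxb_t\|_2\le 2R_x$), and the bound on $\frac{1}{T}\sum_t\|\hAb_t-\Ab_t\|_\fro$ is the same dynamic-regret guarantee for OPSM with comparator sequence $\{\Ab_t\}$ that the paper obtains by invoking Theorem~4 of \citet{Hall2013}, which you instead rederive from first principles (correctly, with matching constants). The only difference is thus that your argument is self-contained where the paper cites the external result, as you yourself note.
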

\noindent Having introduced the main three components of our regret bound, we next formally present our main results.
\subsection{Main results and discussion}
We start by stating our core theorem:
\begin{theorem}\label{thm:main}
Assume \textbf{(A1)}-\textbf{(A6)} hold, and let $\xb_1^\star,\dots,\xb_T^\star$ be a sequence of optimal primal solutions for (\ref{eq:primal_cost}). 
Define the optimal error vectors $\eb^\star_t \defin \Ab_t \xb_t^\star - \bb_t$ stacked into the vector $\underline{\eb}^\star \in \Real^{Tm}$. Let $M_\eb \defin \max_{t\in\Int{T-1}} \Psi_t(\underline{ \eb}^\star)$.
Algorithm~\ref{alg:with_unknown_At} generates a sequence $(\hxb_1,\dots,\hxb_T) \in \prod_{t=1}^T \Xcal_t $ satisfying
$$
\Pcal^\star  -  \Pcal(\hxb_1,\dots,\hxb_T) \leq  \Rcal_T + \Scal_{\eb} + \Scal_{\Ab}
$$
where
$
\Rcal_T = \frac{2R_\lambda G}{\sqrt{T}},
$ 
$
\Scal_e = M_\eb\frac{2 R_\lambda}{\sqrt{T}}
$
and
$
\Scal_\Ab = \frac{6 R_\lambda R_x}{\sqrt{T}} 
 [R_A + \sum_{t=1}^T \|  \Ab_t - \Ab_{t+1} \|_\fro] .
$
If we additionally assume that

~

\textbf{(A7)} $\Ecal$ has Lipschitz continuous gradients over its domain with parameter $L > 0$,

~

\noindent we can take instead $\Rcal_T = \frac{L G^2}{2T} \log(e T)$ and the term
$
M_\eb\frac{2 R_\lambda}{\sqrt{T}}
$
can be replaced by
$
M_\eb  \frac{L G \log(e T)}{T}.
$
\end{theorem}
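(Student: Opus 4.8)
The plan is to obtain Theorem~\ref{thm:main} by stitching together the three ingredients already in place --- Lemma~\ref{lem:standard_static_regret} (which yields $\Rcal_T$), Lemma~\ref{lem:upperbound_scal_e} together with~(\ref{eq:definition_scal_e}) (which yields $\Scal_\eb$), and Lemma~\ref{lem:regret_At} (which yields $\Scal_\Ab$) --- via the saddle-point decomposition set up around~(\ref{eq:definition_gcal}). The point of departure is the exact identity
\[
\Pcal^\star - \Pcal(\hxb_1,\dots,\hxb_T) = \frac{1}{T}\sum_{t=1}^T \big[\, \Lcal_t(\xb_t^\star,\lambdab^\star) - \Lcal_t(\hxb_t,\hlb) \,\big],
\]
which follows from $\Pcal(\xb_1,\dots,\xb_T)=\min_{\lambdab\in\Lambda}\tfrac1T\sum_t\Lcal_t(\xb_t,\lambdab)$ and the definitions of $\lambdab^\star$ and $\hlb$ (both attained in $\Lambda$ by \textbf{(A1)}--\textbf{(A2)}).

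Each summand is split as
\[
\Lcal_t(\xb_t^\star,\lambdab^\star)-\Lcal_t(\hxb_t,\hlb)
= \Gcal_t(\hlb_t)
+ \big[\Lcal_t(\xb_t^\star,\hlb_t)-\Lcal_t(\hxb_t,\hlb_t)\big]
+ \big[\Lcal_t(\hxb_t,\hlb_t)-\Lcal_t(\hxb_t,\hlb)\big].
\]
For the middle bracket I substitute~(\ref{eq:lcal_hatlcal}) to move to the estimated Lagrangian, giving $\Lcal_t(\xb_t^\star,\hlb_t)-\Lcal_t(\hxb_t,\hlb_t) = \big[\hat{\Lcal}_t(\xb_t^\star,\hlb_t)-\hat{\Lcal}_t(\hxb_t,\hlb_t)\big] + \hlb_t^\top(\hAb_t-\Ab_t)(\xb_t^\star-\hxb_t)$; since $\hxb_t\in\arg\max_{\xb_t\in\Xcal_t}\hat{\Lcal}_t(\xb_t,\hlb_t)$ (Algorithm~\ref{alg:with_unknown_At}) and $\xb_t^\star\in\Xcal_t$, the first bracket is $\le 0$, leaving exactly the $t$-th summand of $\Scal_\Ab$ from~(\ref{eq:definition_scal_A}). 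Averaging over $t$ then gives
\[
\Pcal^\star-\Pcal(\hxb_1,\dots,\hxb_T) \;\le\; \frac1T\sum_{t=1}^T\Gcal_t(\hlb_t)\;+\;\Scal_\Ab\;+\;\frac1T\sum_{t=1}^T\big[\Lcal_t(\hxb_t,\hlb_t)-\Lcal_t(\hxb_t,\hlb)\big].
\]
The last average is $\le\Rcal_T$ by Lemma~\ref{lem:standard_static_regret} at $\lambdab=\hlb\in\Lambda$: that lemma uses only the OPSM structure of the dual update --- identical in Algorithms~\ref{alg:with_known_At} and~\ref{alg:with_unknown_At}, both running on the losses $\lambdab\mapsto\Lcal_t(\hxb_t,\lambdab)$ --- together with \textbf{(A5)}, so it transfers verbatim. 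The same facts give $\|\hlb_t-\hlb_{t+1}\|_2\le\eta_t G$, hence with $\varepsilon\defin G\sum_{t=1}^T\eta_t$ the dual sequence lies in $\Lambda_{T,\varepsilon}$; then~(\ref{eq:definition_scal_e}) gives $\tfrac1T\sum_t\Gcal_t(\hlb_t)\le\Scal_\eb$, which Lemma~\ref{lem:upperbound_scal_e} bounds by $(\varepsilon/T)M_\eb$. Finally $\Scal_\Ab$ is bounded by Lemma~\ref{lem:regret_At}.

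It remains to plug in step sizes. In the general case $\Lcal_t(\hxb_t,\cdot)=\Ecal^\star(\cdot)+(\text{affine})$ is only known to be convex, so I take $\eta_t=2R_\lambda/(G\sqrt T)$, giving $\Rcal_T=2R_\lambda G/\sqrt T$ (convex branch of Lemma~\ref{lem:standard_static_regret}), whence $\varepsilon=G\sum_{t=1}^T\eta_t=2R_\lambda\sqrt T$ and $\Scal_\eb\le(\varepsilon/T)M_\eb=2R_\lambda M_\eb/\sqrt T$. Under \textbf{(A7)}, $L$-Lipschitzness of $\nabla\Ecal$ is equivalent (standard conjugate duality) to $(1/L)$-strong convexity of $\Ecal^\star$, and hence of $\Lcal_t(\hxb_t,\cdot)$, i.e.\ $\kappa=1/L$; the strongly convex branch of Lemma~\ref{lem:standard_static_regret} then gives $\Rcal_T=LG^2\log(eT)/(2T)$ with $\eta_t=L/t$, so $\varepsilon=GL\sum_{t=1}^T 1/t\le GL\log(eT)$ and $\Scal_\eb\le(\varepsilon/T)M_\eb\le M_\eb LG\log(eT)/T$. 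Adding the three contributions yields the stated bounds.

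Most of this is routine bookkeeping; the delicate point I expect is in the second paragraph --- confirming that the decomposition still ``closes'' once $\hxb_t$ comes from the estimated Lagrangian $\hat{\Lcal}_t$ rather than $\Lcal_t$, so that the only residual left by the optimality of $\hxb_t$ is precisely $\Scal_\Ab$, and that Lemma~\ref{lem:standard_static_regret}, stated via Algorithm~\ref{alg:with_known_At}, applies unchanged to the dual iterates of Algorithm~\ref{alg:with_unknown_At}. The conjugate-duality identity relating \textbf{(A7)} to strong convexity of $\Ecal^\star$ is classical but should be cited explicitly.
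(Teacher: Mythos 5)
Your proposal is correct and follows essentially the same route as the paper: the same saddle-point identity and decomposition around~(\ref{eq:definition_gcal}), with your inline treatment of the middle bracket via~(\ref{eq:lcal_hatlcal}) and the optimality of $\hxb_t$ for $\hat{\Lcal}_t$ being exactly the content of the paper's Lemma~A, and the same step-size instantiations (including $\kappa=1/L$ under \textbf{(A7)}) yielding $\Rcal_T$, $\Scal_\eb$ and $\Scal_\Ab$.
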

The proof of the results can be found in the Appendix. The statement of Theorem~\ref{thm:main} calls for some comments. Omitting the contribution of terms depending on $\{R_\lambda, R_x, R_A, G\}$, Theorem~\ref{thm:main} guarantees that when $\Ab_t$ is assumed to be known, the regret  
$
\Pcal^\star  -  \Pcal(\hxb_1,\dots,\hxb_T)
$
is upper-bounded by terms scaling with $\Ocal(M_\eb/\sqrt{T})$, or $\Ocal(M_\eb\log(T)/T)$ respectively for the convex or strongly convex cases. As a result, the upper bound is mainly driven by how smooth the sequence $\{\eb^\star_t\}_{t=1}^T$ varies over time, as measured by $M_\eb$. In the more challenging setting where the constraint matrices $\Ab_t$'s are also estimated, we pay an additional cost
$\Ocal\big(\frac{1}{\sqrt{T}} [ 1 + \sum_{t=1}^T \|  \Ab_t - \Ab_{t+1} \|_\fro ]\big)$, so that the potential strong convexity of $\Ecal^*$ (or equivalently, the gradient Lipschitz continuity of $\Ecal$) plays only a secondary role compared to the leading term depending on the smoothness of the $\Ab_t$'s.
\par
\textbf{Possible instantiations of $\Ecal$:} We now present possible valid instantiations of the error function $\Ecal$. 
A first example for which the set of assumptions \textbf{(A1)-(A2)} can be satisfied considers $\Ecal(\zb) = r \cdot \Omega(\zb)$ where $\Omega$ refers to any norm on $\Real^m$. 
Indeed, it is well known (e.g., see Example 3.26 in \cite{Boyd2004}) that $\Ecal^\star$ corresponds in this case to the indicator function of the ball for the dual norm $\Omega^*$ with radius $r$. In particular, if $\Omega$ is the $\ell_2$ norm, we can take $R_\lambda = r$.
\par
Tighter regret bounds can be obtained if the function $\Ecal$ is taken to be gradient Lipschitz continuous (see Theorem~\ref{thm:main} and Assumption~\textbf{(A7)}), while preserving the boundedness of the domain $\Lambda$ of its Fenchel conjugate. A possible choice in this case is an instance of a Huber function (e.g., see Section 10.6 in~\protect\cite{Hastie2009}), as detailed in Table~\ref{tab:ecal_compact_version}.
\par
In the setting of the online ad allocation problem, it is sometimes required to impose an \textit{asymmetric} error function (e.g., to penalize under- and over-delivery differently). We show in Table~\ref{tab:ecal_compact_version} corresponding valid instantiations of $\Ecal$ that depend only on their arguments via their positive parts.
%
\begin{table}
\centering
\hspace*{-0.3cm}\begin{tabular}{ccc}
$\Ecal(\mathbf{z})$ & $\Ecal^\star(\boldsymbol{\lambda})$ & $\Lambda$  \\ \hline
\\
 $ \Hcal_{R_\lambda,L}( \|\mathbf{z} \|_2) $ &  ${\cal I}_{\Bcal_{R_\lambda}}(\boldsymbol{\lambda}) + \frac{1}{2L} \| \boldsymbol{\lambda} \|_2^2$ & $\Bcal_{R_\lambda}$  \\
 $\Hcal_{R_\lambda,L}( \|[\zb]_+\|_2)$ & ${\cal I}_{\Bcal_{R_\lambda} \cap \Real^m_+}(\boldsymbol{\lambda}) + \frac{1}{2L}\|\boldsymbol{\lambda}\|_2^2$ & $\Bcal_{R_\lambda} \cap \Real^m_+$  \\
 $R_\lambda \cdot \|\mathbf{z} \|_2$ & ${\cal I}_{\Bcal_{R_\lambda}}(\boldsymbol{\lambda})$ & $\Bcal_{R_\lambda}$  \\
 $R_\lambda \cdot \| [\mathbf{z}]_+ \|_2$ & ${\cal I}_{\Bcal_{R_\lambda} \cap \Real^m_+}(\lambdab)$ & $\Bcal_{R_\lambda} \cap \Real^m_+$ 
\end{tabular}
\caption{Examples of non-additive penalty functions.
For simplicity, we display only expressions based on the $\ell_2$ norm, but similar formula can be obtained for $\ell_q, q \in\{1,\infty\}$.
We recall that 
$\Lambda \defin \dom(\Ecal^\star) = \{\lambdab : \Ecal^\star(\lambdab) < +\infty\}$. We define $\Bcal_r$ as the Euclidean ball in $\Real^m$ with radius $r$, while $\Ical_C$ is the indicator function of the convex set $C$. 
The function $\Hcal_{r,s}$ is defined as $\Hcal_{r,s}(t) \defin \frac{1}{2} \min\{s t^2,\frac{r^2}{s}\} + r [ |t| -\frac{r}{s} ]\protect_+$.
The expressions are derived in Lemma~B in the Appendix.\label{tab:ecal_compact_version}}
\end{table}
\par
\textbf{Complexity of computing $\hxb_t$:} Assumption~\textbf{(A4)}, about the exact computational oracle for $\hxb_t$, hides, and concentrates, the difficulties related to the optimization with respect to the primal variables $\xb_t$.
We can observe, in the light of the decomposition~(\ref{eq:definition_gcal}), that an approximate maximization---for instance leveraging the concavity of $f_t$---would lead to residual terms that would be hard to control in our online setting. Although it may appear at first sight that an exact maximization is an overly strong requirement, wide classes of problems fall within the scope of this assumption; we can for instance cite the exactly-solved subproblems that are commonly encountered in the context of proximal methods~\cite{Parikh2013} or conditional-gradient algorithms~\citep[see Table 1]{Jaggi2013}.
We conjecture that, thanks to Assumption~\textbf{(A4)}, we may weaken Assumption~\textbf{(A3)} by, for instance, trying to drop the concavity assumption of the $f_t$'s and the convexity of the $\Xcal_t$'s.
\par
\textbf{Lower bounds:}
Based on Assumptions \textbf{(A1)-(A2)-(A3)}, we know that the minimax equality for our saddle point problem holds~\cite{Sion1957}. In particular, defining the (convex) dual function
$$
\Dcal(\lambdab) \defin \frac{1}{T} \sum_{t=1}^T \Dcal_t(\lambdab) \defin \frac{1}{T} \sum_{t=1}^T \max_{\xb_t \in \Xcal_t} \Lcal_t(\xb_t,\lambdab),
$$
we have the equality
$$
\Pcal^\star  -  \Pcal(\hxb_1,\dots,\hxb_T)
=
\min_{\lambdab \in \Lambda}\Dcal(\lambdab) - \Pcal(\hxb_1,\dots,\hxb_T).
$$
This new relationship makes it possible to derive complementary guarantees for our problem. For instance, when we need not estimate the constraint matrices $\Ab_t$'s,\footnote{When $\Ab_t$'s are also estimated, the lower bound is more involved, and requires the introduction of an estimated dual function $\hat{\Dcal}_t$ along the lines of~(\ref{eq:hatlcal}).} it can be shown that we can \textit{lower bound} 
$
\Pcal^\star  -  \Pcal(\hxb_1,\dots,\hxb_T)
$
by
$$
\bigg[  \frac{1}{T}  \sum_{t=1}^T \Dcal_t(\hlb_t) - \Pcal(\hxb_1,\dots,\hxb_T) - \Rcal_T  \bigg]_+,
$$
which sets a lower limit on the best performance we could get with the sequences $\{\hxb_t,\hlb_t\}_{t=1}^T$. Interestingly, this lower bound can not only be practically computed, but it also gives an indication about the inherent difficulty of the problem at hand (e.g., if it is large, the online strategy cannot compete efficiently with its offline counterpart).
\section{Experiments}

In this section, we conduct two sets of experiments. First, we consider a synthetic data and demonstrate the benefit of handling truly non-additive penalties over a heuristic based on an additive relaxation. Second, we report results on an online advertising data set consisting of a sample of 3.3 million bid requests gathered from a large ad serving system used in production at Amazon. Hence, these data account for external advertiser constraints encountered in practice, such as user behavioral targeting.  
We show that the rate at which regret vanishes matches our theory.

\subsection{Additive versus non-additive modeling}

We first investigate if the use of a non-additive penalty on average constraint violations yields improvements in reward and/or constraint violations compared to the additive error formulation adopted in standard online learning. The additive long-term penalty is defined as follows:
\begin{equation}\label{eq:additive_error}
\frac{1}{T}\sum_{t=1}^T\Big[ f_t(\xb_t)  - \Ecal(\Ab_t \xb_t - \bb_t)\Big] .
\end{equation}

In the experiments, we consider a linear reward function $f_t(\xb) = \ub_t^\top\xb_t$, with the simplicial constraint $\xb_t\in \Xcal_t \defin \{  \xb_t \in \Real_+^d : \oneb^\top \xb_t \leq 1 \}$. These choices of $f_t$ and $\Xcal_t$ are reminiscent of the online ad allocation setting we explore at greater length in Section~\ref{sec:online_ad_allocation_exp}. We simulated data with $m=25, d= 10,T=200$ and where $\{\mathbf{A}_t, \mathbf{b}_t, \mathbf{u}_t\}$ were generated to be standard random Gaussian matrices and vectors normalized to unit norms.  We ran our online algorithm with primal-dual updates dictated by the non-additive problem formulation in (\ref{eq:primal_cost}) and compared it to a baseline algorithm with primal-only updates following a standard online learning formulation with additive penalties on constraint violations as in~(\ref{eq:additive_error}). Since the exact maximization $\max_{\xb_t\in\Xcal_t} [f_t(\xb_t)  - \Ecal(\Ab_t \xb_t - \bb_t)]$ 
may not be obtained in closed-form in this case, we used the solver \texttt{CVXPY}~\cite{Diamond2014}. We shall refer to the two approaches as respectively \texttt{Non-additive} and \texttt{Additive}.
\par
So as to cover different types of geometries and convexity assumptions, we consider several choices of penalty functions, namely $\Ecal(\zb) = R_\lambda \cdot \| \zb \|_q$ for $q \in \{1,2,\infty\}$, along with $\Ecal(\zb) = R_\lambda \cdot \Hcal_{1,1}(\|\zb\|_2)$ where $\Hcal_{1,1}$ is the Huber function defined in Table~\ref{tab:ecal_compact_version}. 
For each 
$\Ecal$, and for, both, \texttt{Non-additive} and \texttt{Additive}, we compute the reward over $T$ rounds given by $\frac{1}{T}\sum_t \mathbf{u}_t^\top\hxb_t$ and the (normalized) non-additive penalty 
$
\Ecal(\frac{1}{T}\sum_t\mathbf{A}_t\hat{\mathbf{x}}_t-\mathbf{b}_t)/R_\lambda
$ 
for the primal variables $\hat{\mathbf{x}}_t$ generated online by the two algorithms.
\par
Figure~\ref{fig:reward_vs_ecal} shows the resulting reward versus constraint violation for varying values of $R_\lambda$, averaged over 10 generations of $\{\mathbf{A}_t, \mathbf{b}_t, \mathbf{u}_t\}$. As can be seen from the curves, in applications where constraint violation are to be measured as a non-additive penalty, the additive relaxation~(\ref{eq:additive_error}) leads to significant deterioration of the performance compared to the non-additive penalty. In particular, the domain of the achievable constraint violation is very narrow when applying the additive heuristic, meaning that there is little room for making trading-offs. As expected, we also observe that points of \texttt{Non-additive} and \texttt{Additive} superimpose in the regime where $R_\lambda \ll 1$, i.e., when the two formulations focus on the optimization of the reward. 
Finally, we remark that since the quality of the additive relaxation~(\ref{eq:additive_error}) essentially hinges on the gap in the Jensen's inequality $\Ecal(\frac{1}{T} \sum_{t=1}^T \Ab_t \xb_t - \bb_t) \leq \frac{1}{T} \sum_{t=1}^T \Ecal( \Ab_t \xb_t - \bb_t)$, and therefore on the distribution of the residuals $\{\Ab_t \xb_t - \bb_t\}_{t=1}^T$, we provide in the Appendix additional simulations where $\{\Ab_t,\bb_t,\ub_t\}$ are generated according to different distributions, viz, Cauchy, uniform and gamma. In a nutshell, the same conclusions hold, even for the distributions that appear to make the relaxation~(\ref{eq:additive_error}) tighter. 
\begin{figure}[t]
  \centering
  \includegraphics[width=0.5\textwidth]{./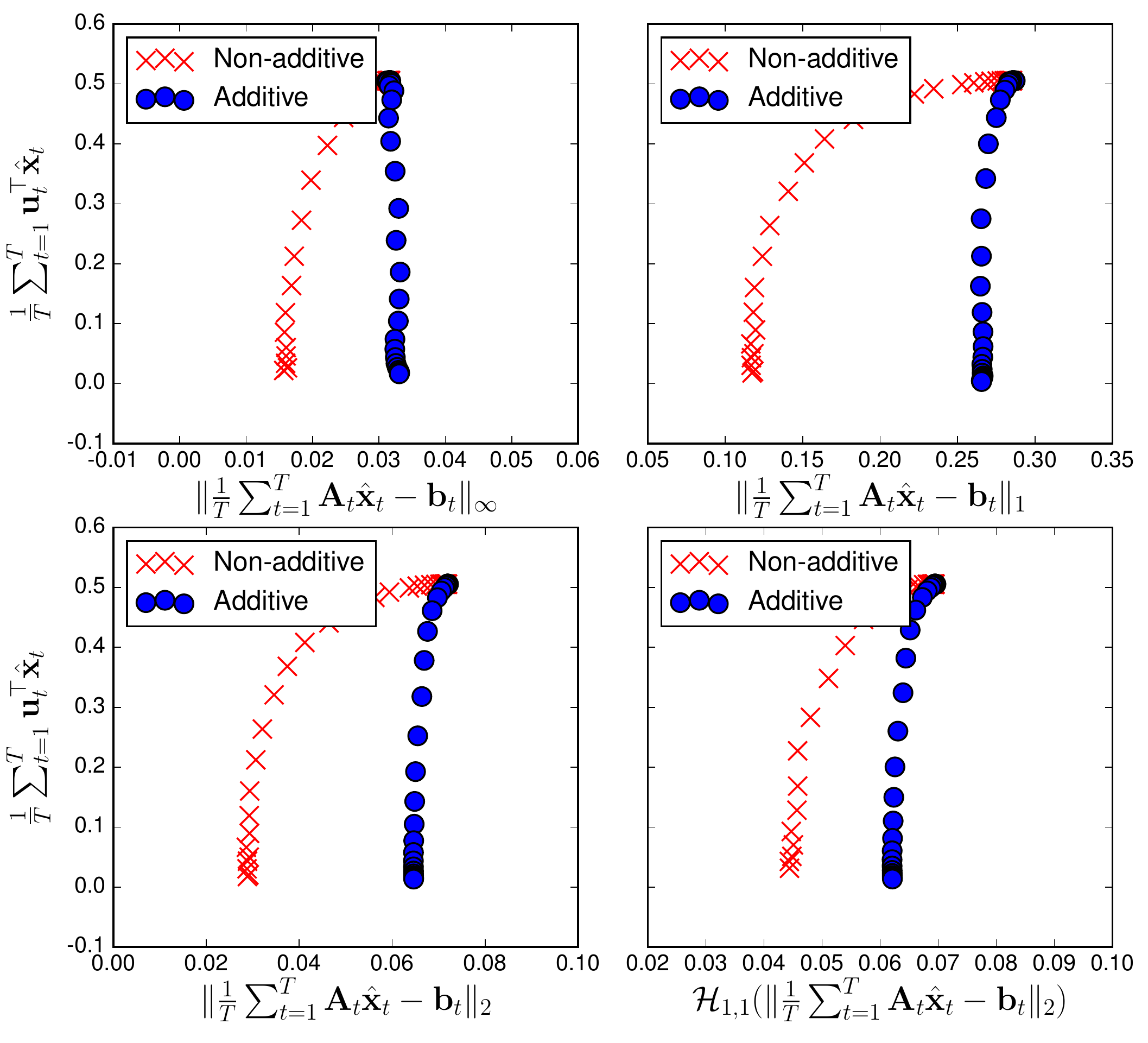}
  \caption{Reward $\frac{1}{T}\sum_t \ub_t^\top\hxb_t$ as a function of non-additive penalty $\Ecal(\frac{1}{T}\sum_t\mathbf{A}_t\hat{\mathbf{x}}_t-\mathbf{b}_t)/R_\lambda$ for 
  $R_\lambda = 2^{\gamma}$ with $\gamma \in \{-8,-7.5,\dots, 10\}$.
  Red crosses correspond to our proposed online algorithm, while blue circles stand for a baseline algorithm with additive penalties.
  Each subplot displays a different instantiation of $\Ecal$, namely $\Ecal(\zb) = R_\lambda \cdot \| \zb \|_q$ for $q \in \{1,2,\infty\}$ and $\Ecal(\zb) = R_\lambda \cdot \Hcal_{1,1}(\|\zb\|_2)$ where $\Hcal_{1,1}$ is defined in Table~\ref{tab:ecal_compact_version} (best seen in color).}\label{fig:reward_vs_ecal}
\end{figure}
\subsection{Regret convergence on real online advertising data}\label{sec:online_ad_allocation_exp}

In the second experiment, we focus on the online ad allocation problem in display advertising subject to long-term constraints on budget consumed per ad. More formally, in the display advertising setting, each user visit to a website or app triggers a bid request $i$, where each bid request has some subset of the $m$ possible ads that can be served. Indexing ads by $j \in\Int{m}$, the welfare of serving an impression for ad $j$ (i.e., the value of showing the ad once to a user) for bid request $i$ is given by $u_{ij}\geq 0$. For bid request $i$, let primal variable $x_{ij} = 1$ correspond to our decision to allocate ad $j$ an impression, with $x_{ij} = 0$ otherwise. 
\par
To map this problem to our formulation, we will further partition the time dimension into disjoint intervals $t \in\Int{T}$ where the number of bid requests in each interval is equal to $N$.
We let $I_t$ be the set of bid requests in round $t$ with $|I_t| = N$. Thus, for a given $t$, the primal variable $\xb_t$ will denote the flattened matrix $x_{ij}$ for $i \in I_t$ and for all ads $j  \in\Int{m}$, so that $\xb_t \in [0,1]^d$ with $d = N \cdot m$. Since we can show at most one ad per bid request, we additionally have the simplicial constraints $\sum_{j=1}^m x_{ij} \leq 1$ for each bid request $i$, and we therefore define the set $\Xcal_t \subseteq [0,1]^d$ as the Cartesian product of those simplices for each $i \in I_t$. We define the welfare vector $\ub_t \in \Real^d$ following the same flattening operation, so that $f_t(\mathbf{x}_t) = \mathbf{u}_t^\top\mathbf{x}_t$. Matrix $\Ab_t \in \Real^{m \times d}$ is such that the $j$-th entry $[\Ab_t \xb_t]_j$ represents the amount of budget consumed for ad $j$  as a result of the allocation vector $\xb_t$ under a cost per impression model.
Finally, the non-additive long-term constraints
are modeled as $\frac{1}{T} \sum_t \mathbf{A}_t\mathbf{x}_t - \mathbf{b}_t$.  Vector $\mathbf{b}_t$ is equal to the constant vector $\bb$ of ad budgets. Hence, the penalty on $\frac{1}{T} \sum_t \mathbf{A}_t\mathbf{x}_t - \mathbf{b}_t$ penalizes deviations from spending 100\% of each ad's budget. We stress the fact that, although $\bb$ is a constant vector, we are in a non-stationary regime since $\Ab_t$ is not equal to some constant matrix $\Ab$ independent of $t$. 

The goal of the experiment is to compare the regret behavior of our proposed online algorithm for different choices of non-additive $\Ecal$ on the online advertising data set from live traffic. For each bid request in our data set, each eligible ad candidate comes with a pre-defined welfare and cost to the advertiser, which determine $\mathbf{u}_t$ and $\mathbf{A}_t$ respectively. 
We partitioned the 3.3 million impressions into batches of equal size, for a total of $T = 10000$ rounds. We computed the average cumulative regret $\Pcal^\star - \Pcal(\hat{\mathbf{x}}_1,\cdots,\hat{\mathbf{x}}_T)$ as a function of $T$ for (symmetric) $\Ecal$ chosen to be 
i) $\Ecal(\zb) = R_\lambda \cdot \|\zb\|_1$ referred to as \texttt{Convex} and ii) $\Ecal(\zb) =  \Hcal_{R_\lambda,1}(\|\zb\|_2)$ referred to as \texttt{Strongly convex}. For all problem instantiations, we set $R_\lambda = 50000$. We computed the term $\Pcal(\hat{\mathbf{x}}_1,\cdots,\hat{\mathbf{x}}_T)$ by running our online primal-dual algorithm over 20 permutations of bid requests. The term $\Pcal^\star$ was obtained using an offline primal-dual method where the dual variable updates correspond to an offline subgradient (resp. gradient) method for the convex  (resp. strongly convex) $\Ecal^*$. Figure \ref{fig:cum_regret} shows the resulting cumulative regret as a function of $T$, averaged over the 20 runs (with error bars that are negligible). We observe that the cumulative regrets incurred for, both, \texttt{Convex} and \texttt{Strongly convex} decrease as $T$ increases, with the latter decreasing at faster rate than the former as predicted by the theory.
\begin{figure}[t]
    \centering
      \includegraphics[width=0.5\textwidth]{./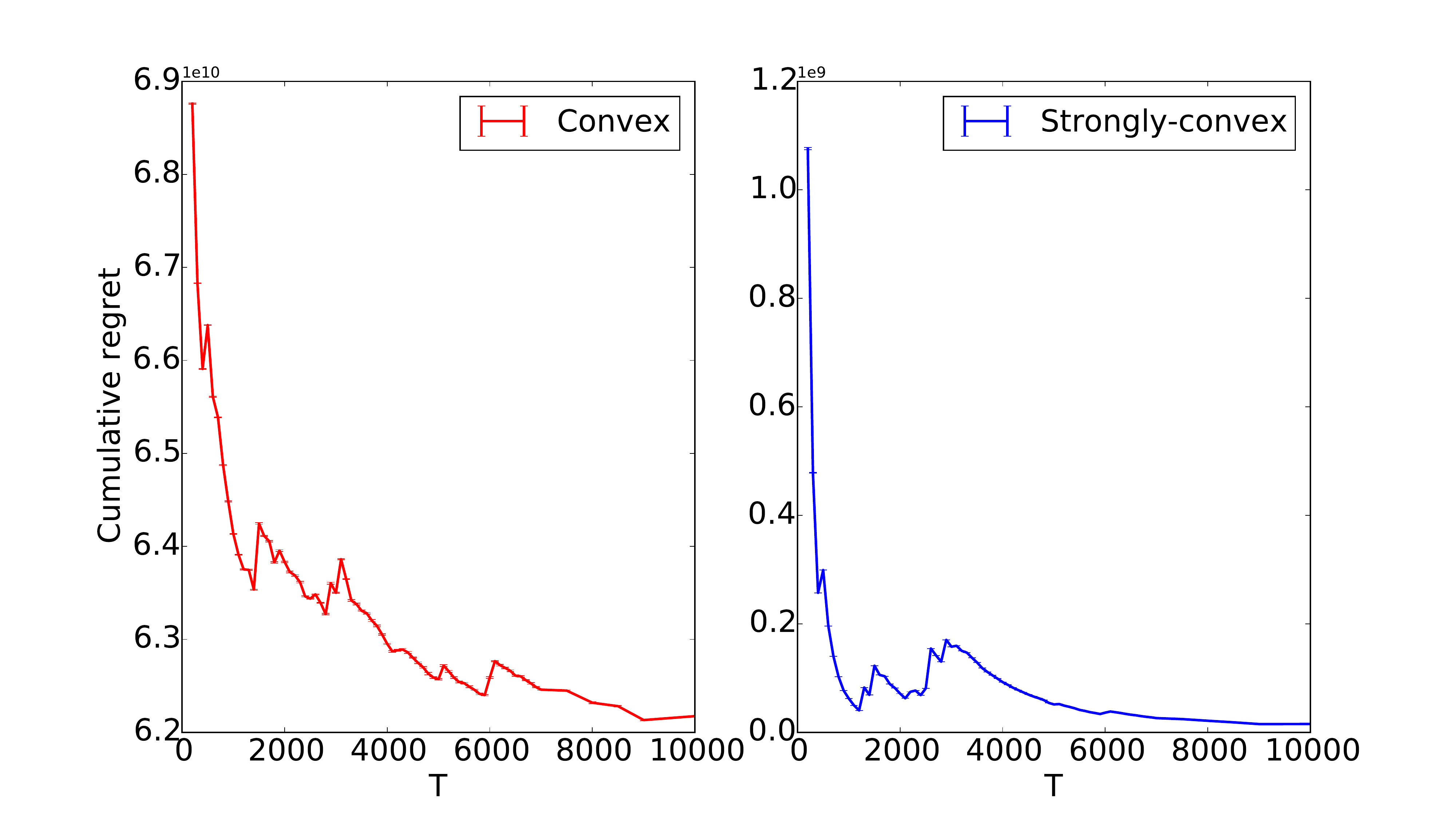}
\caption{Cumulative regret, averaged over 20 runs, measured on the data collected by a real ad serving system as a function of the number of rounds $T$ for $\Ecal(\mathbf{z}) = R_\lambda\|\mathbf{z}\|_1$ (red) and $\Ecal(\mathbf{z}) = {\cal H}_{R_\lambda,1}(\|\mathbf{z}\|_2)$ (blue). Figure best seen in color.}\label{fig:cum_regret}
\end{figure}

\section{Appendix: Proofs}
\renewcommand*{\thelemma}{\Alph{lemma}}
\setcounter{lemma}{0}
\subsection{Proof of Lemma 1}
\begin{proof}
The two claims of the lemma can be found respectively in \cite{Zinkevich2003,Hazan2007} and follows from (possibly, strong) convexity of ${\cal L}_t(\hxb_t,\boldsymbol{\lambda})$ with respect to $\lambdab$.
For completeness, we briefly repeat the core arguments from \cite{Zinkevich2003,Hazan2007}.  The conclusions come from the fact that the term 
$$
\sum_{t=1}^T \Lcal_t(\hxb_t,\hlb_t)  - \Lcal_t(\hxb_t,\lambdab)
$$ 
is shown to be upper bounded by, with $a_t \defin \| \lambdab - \hlb_t \|_2^2$,
$$
\frac{1}{2} \sum_{t=1}^T \bigg[ \frac{1}{\eta_t} \big( a_t -   a_{t+1} \big) - \kappa a_t + \eta_t G^2 \bigg]\\
$$
in turn upper bounded by
$$
\frac{a_1}{2} \bigg[ \frac{1}{\eta_1} - \kappa  \bigg]
+
 \frac{G^2}{2} \sum_{t=1}^T \eta_t
+
\frac{1}{2} \sum_{t=2}^T a_t  \bigg[ \frac{1}{\eta_t} -  \frac{1}{\eta_{t-1}} -  \kappa \bigg].
$$
The choices of the step sizes lead to the advertised  instantiations of $\Rcal_T$.
\end{proof}
\subsection{Proof of Lemma 2}
\begin{proof}
The result stems from an application of strong duality for the above convex program. First, notice that
$$
\frac{1}{T}  \sum_{t=1}^T
 \Gcal_t(\lambdab_t) = 
\frac{1}{T} \sum_{t=1}^T \lambdab_t^\top \eb^\star_t - \Ecal^*(\lambdab_t)
-\Ecal \Big(\!
\frac{1}{T}  \sum_{t=1}^T \eb^\star_t 
\!\Big) .
$$
Introduce the equality constraint $\ab = (  \Deltab \otimes \Ib ) \underline{\lambdab}$---we remind that $\underline{\lambdab} \in \Real^{T\cdot m}$ stands for the vector formed by stacking the $m$-dimensional $\lambdab_t$'s---and write the corresponding Lagrangian (we momentarily omit the constant terms independent of $\lambdab_t$ that have no effects on the maximization), that is, for any $\lambdab_t \in \Lambda, \ab \in \Real^{(T-1)\cdot m}, \alphab \in \Real^{(T-1)\cdot m}$ and $\beta \geq 0$:
\begin{align}
L(\underline{\lambdab},\ab,\alphab, \beta) &= \frac{1}{T}  \sum_{t=1}^T \lambdab_t^\top \eb^\star_t - \Ecal^*(\lambdab_t) \nonumber \\ &+ \alphab^\top (\ab - ( \Deltab \otimes \Ib) \underline{\lambdab}) + \beta (\varepsilon - \Omega_{1/2}(\ab)).\nonumber
\end{align} Using the fact that $\Ecal^*$ is the Fenchel conjugate of $\Ecal$ and that the conjugate of the $\ell_1/\ell_2$ norm $\Omega_{1/2}$ is the indicator function of the unit ball for its dual $\ell_\infty/\ell_2$ norm (Example 3.26 in \cite{Boyd2004}), we can maximize out the primal variables:
$$
\max_{\underline{\lambdab},\ab} L(\underline{\lambdab},\ab,\alphab, \beta) =
 \frac{1}{T}  \sum_{t=1}^T 
\Ecal \Big( 
\eb^\star_t - \big[  T (  \Deltab \otimes \Ib )^\top \alphab  \big]_t
\Big)
\!
+ 
\!
\beta \varepsilon
$$
under the constraint $\max_{t=1\dots T-1} \|  \alphab_t  \|_2 \leq \beta$.
Minimizing out $\beta$, and making the change of variable $\alphab \leftarrow T \cdot \alphab$ leads to the claimed dual minimization problem. Finally, the initial claim holds with equality, since Slater's constraint qualification applies here to guarantee strong duality (Section 5.2.3 in \cite{Boyd2004}).
\end{proof}
\subsection{Proof of Lemma \ref{lem:upperbound_scal_e}}
\begin{proof}
The equality in Lemma~\ref{lem:scal_e_as_min} has a minimization over $\bal$ on its right-hand-side. Therefore,  we have that for every $\bal \in \R^{(T-1)\cdot m}$ we have $\cS_\be$ upper bounded by $\cS_\be(\bal)$. Specifically, let $\bal' \eqdef [((\bDel \bDel^\top)^{-1} \bDel) \otimes \bI] \underline{\be}^*$ and we have that $\cS_\be  \leq \cS_\be(\alpha')$. First, we compute the term inside the first sum of $\cS_\be(\bal')$ for a fixed $t \in \{1, \dots, T\}$ 
\begin{align*}
\be_t^* -  \left[ (\bDel \otimes \bI)^\top[((\bDel \bDel^\top)^{-1} \bDel) \otimes \bI] \underline{\be}^* \right]_t &= \left[ \underline{\be}^* -   (\bDel \otimes \bI)^\top[((\bDel \bDel^\top)^{-1} \bDel) \otimes \bI] \underline{\be}^* \right]_t \\
&= \left[ \underline{\be}^* -   ([(\bDel^\top(\bDel \bDel^\top)^{-1} \bDel) \otimes \bI] \underline{\be}^* \right]_t \\
&= \left[[(\bI - \bDel^\top(\bDel \bDel^\top)^{-1} \bDel) \otimes \bI] \underline{\be}^* \right]_t
\end{align*}
The matrix $\bDel \in \R^{(T-1) \times T}$ is defined by \[\bDel_{ij} = \begin{cases}
1 \quad& i=j, \\
-1 \quad& i = j-1, \\
0 \quad&\mbox{otherwise}.
\end{cases}\]
By direct computation one can show, that $\bDel \bDel^\top \in \R^{(T-1)\times (T-1)}$ is the tridiagonal matrix \[(\bDel \bDel^\top)_{ij} = \begin{cases}
2 \quad&i=j, \\
-1 \quad&|i-j| = 1, \\
0 \quad& \mbox{otherwise}.
\end{cases}\]
There exists a close form solution for the inversion of an arbitrary tridiagonal matrix \cite{TridiagonalInversion}. Using this we get that the matrix $(\bDel \bDel^\top)^{-1} \in \R^{(T-1) \times (T-1)}$ is defined by
\[ [(\bDel \bDel^\top)^{-1}]_{ij} = \begin{cases}
\frac{i(T-j)}{T} \quad&i < j, \\
\frac{(T-i)j}{T} \quad&i \geq j.
\end{cases}\]
No,w we can again proceed with a direct calculation to compute $(\bDel \bDel^\top)^{-1} \bDel \in \R^{(T-1) \times T}$ and get
\[[(\bDel \bDel^\top)^{-1} \bDel]_{ij} = \begin{cases}
-\frac{i}{T} \quad& i < j,\\
\frac{T-i}{T} \quad& i \geq j.
\end{cases}\]
Finally, computing the full product $\bDel^\top (\bDel \bDel^\top)^{-1} \bDel \in \R^{T \times T}$ we get
\[[\bDel^\top(\bDel \bDel)^{t-1} \bDel]_{ij} = \begin{cases}
\frac{T-1}{T} \quad& i=j, \\
-\frac{1}{T} \quad& i\neq j.
\end{cases}\]
Observe, that the matrix $\bI - \bDel (\bDel \bDel^\top)^{-1} \bDel \in \R^{T \times T}$ is a matrix full of values $1/T$. It follows that \[\be_t^* -  \left[ (\bDel \otimes \bI)^\top[((\bDel \bDel^\top)^{-1} \bDel) \otimes \bI] \underline{\be}^* \right]_t = \frac{1}{T}\sum_{i=1}^T \be_i^*, \qquad \forall t \in \{1, \dots, T\}\]
Also, observe that the first two terms in $\cS_\be(\bal')$ cancel out and we have \[\cS_\be(\bal') = \frac{\epsilon}{T} \Omega_{\infty/2}(\bal') =\frac{\epsilon}{T} \Omega_{\infty/2}\left( [((\bDel \bDel^\top)^{-1} \bDel) \otimes \bI] \underline{\be}^*  \right)\]
Plugging in the matrix $(\bDel \bDel)^{-1} \bDel$ computed earlier in the above expression we get the final result.
\end{proof}
\subsection{Proof of Lemma \ref{lem:concentration_bound}}
Let $\bW_t \in \R^{m \times mT}$ be a block matrix defined by
\begin{equation} \label{def:Wt}
\bW_t \eqdef \left[  ~ \underbrace{\left(\frac{T - t}{T}\right) \bI ~\bigg|~ \dots ~\bigg|~ \left(\frac{T - t}{T}\right) \bI}_{t} ~\bigg|~ \underbrace{\left(-\frac{t}{T}\right)\bI ~\bigg|~ \dots ~\bigg|~ \left(-\frac{t}{T}\right) \bI}_{(T-t)}  ~ \right]
\end{equation}
Observe that $Z_t = \|\bW_t \underline{\be}^*\|_2$.  From now on $t$ will be assumed to belong to $\{1, \dots, T-1\}$ in all derivations. Let $s > 0$. Then using the convexity and monotonicity of the exponential together with Jensen's inequality, the fact that sum of elements is more than the maximal element, linearity of expectation, and the fact that the sum can be bounded by the number of terms times the maximal element, we get the following sequence of inequalitites
\begin{equation} \label{eq:bounding_exp_max}
e^{s \Exp \left[\max_t Z_t^2\right]} \leq \Exp \left[e^{s \max_t Z_t^2}\right] = \Exp \left[\max_t e^{sZ_t^2}\right] \leq  \Exp \left[ {\textstyle\sum_{t}} e^{sZ_t^2} \right] = {\textstyle \sum_t} \Exp \left[e^{sZ_t^2}\right] \leq T \max_t \Exp\left[ e^{sZ_t^2} \right]
\end{equation}
Assume $0 \leq s < 1/(2\sigma^2 \|\bW_t^\top \bW_t\|_2)$. Then according to Remark 2.3 in \cite{SubgaussianQuadratic} one has the bound \begin{equation} \label{eq:subgaussian_quadratic}
\Exp [ \exp(s \|\bW_t \underline{\be}^*\|^2)] \leq \exp \left( \sigma^2 \Tr\left(\bW_t^\top \bW_t\right)s + \frac{\sigma^4 \Tr\left( (\bW_t^\top \bW_t)^2 \right) s^2 + \|\bW_t\bmu\|^2 s}{1 - 2 \sigma^2 \|\bW_t^\top \bW_t\|_2 s} \right)
\end{equation}
Observe that the right-hand-side of \eqref{eq:subgaussian_quadratic} is increasing in the arguments $\Tr(\bW_t^\top t)$, $\Tr((\bW_t^\top \bW_t)^2)$,  $\|\bW_t \bmu\|^2$, and $\|\bW_t^\top \bW_t\|_2$. From the definition of $\bW_t$ in \eqref{def:Wt} and a maximization in $t$ we have \begin{equation*} \label{eq:Frob_W_t}
\|\bW_t\|^2_F \stackrel{\eqref{def:Wt}}{=} \frac{m t(T-t)}{T} \leq \frac{mT}{4}
\end{equation*}
which can be further used to bound
\begin{align*}
\Tr(\bW_t^\top \bW_t) &= \|\bW_t\|_F^2 \leq \frac{mT}{4} \\ 
\Tr((\bW_t^\top \bW_t)^2) &= \|\bW_t^\top \bW_t\|_F^2 \leq \|\bW_t\|_F^4 \leq \frac{m^2T^2}{16} \\
\|\bW_t^\top \bW_t\|_2 &\leq \|\bW_t^\top \bW_t\|_F \leq \|\bW_t\|_F^2 \leq \frac{mT}{4}
\end{align*}
using standard arguments. Also, from definition \eqref{def:Zt} we have $\|\bW_t \bmu\|^2 \leq \omega_{\bmu}$. Combining these bounds with \eqref{eq:subgaussian_quadratic} and setting $s \eqdef s^*$ defined by \begin{equation} \label{def:s_star}
0 < \left[ s^* \eqdef \frac{4\sqrt{\log(T)}}{\sqrt{\sigma^2 m T}\left(\sqrt{\sigma^2 m T + 8 \omega_{\bmu}} + 2\sqrt{\sigma^2 mT \log(T)}\right) } \right] < \frac{2}{\sigma^2 m T} \leq \frac{1}{2 \sigma^2 \|\bW_t^\top \bW_t\|_2}
\end{equation}
we get \begin{align} 
\Exp[\exp(s^* \|\bW_t \underline{\be}^*\|^2)] &\leq \exp\left( \frac{ s^*}{4} \left( \sigma^2mT + \frac{\sigma^4 m^2 T^2 s^* + 16 \omega_{\bmu} }{4 - 2 \sigma^2 m T s^*} \right) \right) \nonumber \\ &\stackrel{\eqref{def:s_star}}{=}  \exp \left( \frac{s^*}{4}  \left(\sigma^2 m T + 4 \omega_{\bmu} +  \sqrt{\sigma^2 m T + 8 \omega_{\bmu}} \sqrt{\sigma^2 m T\log(T)}\right)\right). \label{eq:after_s}
\end{align}
We finally have all the ingredients. Using the definition of $Z_t$ in \eqref{def:Zt}, using the non-negativity of $Z_t$, the concavity of the square root together with Jensen's inequality, the definition of the exponential, the inequality \eqref{eq:bounding_exp_max}, the inequality \eqref{eq:after_s} with the observation that the right-hand-side does not depend on $t$, the definition of $s^*$ in \eqref{def:s_star}, the bound $\sqrt{s} \leq \frac{1}{2}(s + 1)$, the fact that all the entries are positive, and $a + b = \sqrt{a^2 + 2ab + b^2}$ we get
\begin{align*}
\Exp \left[ \max_{t} Z_t\right] &= \Exp\left[\sqrt{ \max_t Z_t^2}  \right] \\ 
&\leq \sqrt{\Exp [\max_{t} Z_t^2]} \\
&= \sqrt{ \log \left( e^{ s^* \Exp \left[  \max_t Z_t^2    \right]}\right)/s^*} \\
&\stackrel{\eqref{eq:bounding_exp_max}}{\leq} \sqrt{ \log \left( T \max_t \Exp \left[ e^{s^* Z_t^2} \right]  \right) / s^*} \\
&\stackrel{\eqref{eq:after_s}}{\leq} \sqrt{\frac{\log(T)}{s^*} + \frac{1}{4}  \left(\sigma^2 m T + 4 \omega_{\bmu} +  \sqrt{\sigma^2 m T + 8 \omega_{\bmu}} \sqrt{\sigma^2 m T\log(T)} \right)  } \\
&\stackrel{\eqref{def:s_star}}{=}\frac{1}{2}\sqrt{\sigma^2 m T + 4 \omega_{\bmu} +  2\sqrt{\sigma^2 m T + 8 \omega_{\bmu}} \sqrt{\sigma^2 m T\log(T)} + 2 \sigma^2 m T \sqrt{\log(T)}} \\
&\leq \frac{1}{2}\sqrt{2\sigma^2 m T + 4 \omega_{\bmu} +  2\sqrt{\sigma^2 m T + 8 \omega_{\bmu}} \sqrt{\sigma^2 m T\log(T)} + \sigma^2 m T \log(T)} \\
&\leq \sqrt{\sigma^2 m T + \omega_{\bmu} +  2\sqrt{\sigma^2 m T + \omega_{\bmu}} \sqrt{\sigma^2 m T\log(T)} + \sigma^2 m T \log(T)} \\
&\leq \sqrt{\sigma^2 m T + \omega_{\bmu}} + \sqrt{\sigma^2 m T \log(T)}
\end{align*} 
which concludes the proof.

\subsection{Proof of Lemma \ref{lem:regret_At}}
\begin{proof}
We first apply the Cauchy-Schwartz inequality to get
\begin{align}
|\hlb_t^\top(\hAb_t - \Ab_t )(\xb_t - \hxb_t)| &\leq \|\hlb_t\|_2 \|\xb_t - \hxb_t\|_2 \triple \hAb_t - \Ab_t\triple_2 \nonumber \\ &\leq 2R_\lambda R_x \|\hAb_t - \Ab_t\|_\fro\nonumber
\end{align}
where we have used that the operator norm is smaller than the Frobenius norm.
We then leverage Theorem 4 from~\cite{Hall2013}: in their notation, we have $G_\ell \leq 1$ (since the sub-gradients of $\Ab \mapsto \| \Ab_t - \Ab \|_\fro$ are bounded by 1), $\sigma=1$, $M \leq R_A/2$ and $D_{\max} \leq 2 R_A^2$ along with $\Phi = \Ib$ and the learning rate $\eta_t = R_A/\sqrt{t}$. 
This leads to
\begin{align}
\sum_{t=1}^T \|  \hAb_t - \Ab_t  \|_\fro &- \min_{\tilde{\Ab}_1,\dots,\tilde{\Ab}_T \in \Acal} \sum_{t=1}^T \|  \tilde{\Ab}_t - \Ab_t  \|_\fro \nonumber \\
&\leq \frac{2R_A^2}{\eta_{T+1}} + \frac{2R_A}{\eta_{T}}V + \frac{1}{2} \sum_{t=1}^T \eta_t\nonumber\\ 
&\leq  \frac{2R_A}{\eta_{T}} (R_A + V) + R_A \sqrt{T}\nonumber\\
&\leq 3 \sqrt{T} (V + R_A),\nonumber
\end{align}
where $V$ stands for $\sum_{t=1}^T \|  \Ab_t - \Ab_{t+1} \|_\fro$.
\end{proof}
\subsection{Proof of Theorem 1}
\begin{proof}
The proof consists in putting together the components introduced in the Section 3 of the core paper.
We start from
$$
\Pcal^\star - \Pcal(\hxb_1,\dots,\hxb_T) = \frac{1}{T} \sum_{t=1}^T
\Lcal_t(\xb_t^\star,\lambdab^\star) - \Lcal_t(\hxb_t,\hlb)
$$
which, using the decomposition~(6), 
is equal to
$$
 \frac{1}{T} \sum_{t=1}^T
(\Lcal_t(\xb_t^\star,\lambdab^\star) - \Lcal_t(\xb_t^\star,\hlb_t)) + (\Lcal_t(\xb_t^\star,\hlb_t) -  \Lcal_t(\hxb_t,\hlb)).
$$
Moreover, Lemma A (see below) gives  
\begin{align}
\Lcal_t(\xb_t^\star,\hlb_t) - \Lcal_t(\hxb_t,\hlb) &\leq \Lcal_t(\hxb_t,\hlb_t)  - \Lcal_t(\hxb_t,\hlb) \nonumber \\ &+ \hlb_t^\top(\hAb_t - \Ab_t )(\xb_t ^\star- \hxb_t)\nonumber.
\end{align}
Recalling the definitions of $\Rcal_T$, $\Gcal_t$ and $\Scal_\Ab$ respectively in 
Lemma~\ref{lem:standard_static_regret}, 
(\ref{eq:definition_gcal}) 
and~(\ref{eq:definition_scal_A}),
we obtain
$$
\Pcal^\star - \Pcal(\hxb_1,\dots,\hxb_T) \leq 
\frac{1}{T} \sum_{t=1}^T \Gcal_t(\hlb_t)
+ \Rcal_T + \Scal_\Ab.
$$
Noticing that with the choice $\varepsilon/T \defin G \sum_{t=1}^T \eta_t/T$, the relationship~(\ref{eq:definition_scal_e})
holds, we finally obtain
$$
\Pcal^\star - \Pcal(\hxb_1,\dots,\hxb_T) \leq 
\Rcal_T + \Scal_\eb + \Scal_\Ab.
$$
\par
The rest of the proof follows by instantiating $\Rcal_T$, $\Scal_\eb$ and $\Scal_\Ab$. The value of $\Rcal_T$ is given in Lemma~1, 
while the upper bound for $\Scal_\eb$ is described in Lemma~3. 
\par
Moreover, $\varepsilon/T$ is equal to $2R_\lambda /\sqrt{T}$ and $G L \log(eT)/T$ in the convex and strongly convex cases respectively (given the $\eta_t$ from Lemma~1). 
Finally, the upper bound on $\Scal_\Ab$ is given by Lemma~4. 
\end{proof}
\section{Appendix: Technical lemmas}
\begin{lemma}\label{lmm1}
For any $\lambdab \in \Lambda$ and $\xb_t \in \Xcal_t$, we have
\begin{align}
\Lcal_t(\xb_t,\hlb_t) - \Lcal_t(\hxb_t,\lambdab) &\leq \Lcal_t(\hxb_t,\hlb_t)  - \Lcal_t(\hxb_t,\lambdab) \nonumber \\ &+ \hlb_t^\top(\hAb_t - \Ab_t )(\xb_t - \hxb_t)\nonumber.
\end{align}
\end{lemma}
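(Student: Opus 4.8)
The plan is to exploit the single optimality property that defines $\hxb_t$ in Algorithm~\ref{alg:with_unknown_At}, namely that $\hxb_t \in \arg\max_{\xb_t \in \Xcal_t} \hat{\Lcal}_t(\xb_t, \hlb_t)$ with respect to the \emph{estimated} Lagrangian, and then to translate statements about $\hat{\Lcal}_t$ into statements about the true Lagrangian $\Lcal_t$ via the identity~(\ref{eq:lcal_hatlcal}), i.e. $\hat{\Lcal}_t(\xb_t,\lambdab) = \Lcal_t(\xb_t,\lambdab) - \lambdab^\top(\hAb_t - \Ab_t)\xb_t$. The dual variable $\lambdab$ is inert throughout and is merely carried along for the final subtraction.

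Concretely, first I would rewrite $\Lcal_t(\xb_t,\hlb_t)$ using~(\ref{eq:lcal_hatlcal}) as $\hat{\Lcal}_t(\xb_t,\hlb_t) + \hlb_t^\top(\hAb_t - \Ab_t)\xb_t$. Since $\xb_t \in \Xcal_t$ and $\hxb_t$ maximizes $\hat{\Lcal}_t(\cdot,\hlb_t)$ over $\Xcal_t$, I would then bound $\hat{\Lcal}_t(\xb_t,\hlb_t) \le \hat{\Lcal}_t(\hxb_t,\hlb_t)$, and apply~(\ref{eq:lcal_hatlcal}) once more in the reverse direction to get $\hat{\Lcal}_t(\hxb_t,\hlb_t) = \Lcal_t(\hxb_t,\hlb_t) - \hlb_t^\top(\hAb_t - \Ab_t)\hxb_t$. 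Chaining these three relations gives
$$
\Lcal_t(\xb_t,\hlb_t) \le \Lcal_t(\hxb_t,\hlb_t) + \hlb_t^\top(\hAb_t - \Ab_t)(\xb_t - \hxb_t),
$$
and subtracting $\Lcal_t(\hxb_t,\lambdab)$ from both sides yields the claim for arbitrary $\lambdab \in \Lambda$.

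The only subtlety — and what I would flag as the mild "obstacle" — is that $\hxb_t$ is optimal for $\hat{\Lcal}_t$ and \emph{not} for $\Lcal_t$, so one cannot directly assert $\Lcal_t(\xb_t,\hlb_t) \le \Lcal_t(\hxb_t,\hlb_t)$; the role of the lemma is precisely to quantify this discrepancy, and the residual $\hlb_t^\top(\hAb_t - \Ab_t)(\xb_t - \hxb_t)$ is exactly the difference between the correction terms in~(\ref{eq:lcal_hatlcal}) evaluated at $\xb_t$ versus $\hxb_t$. I would double-check that the same $\hlb_t$ appears on both sides of the chained inequality (we only ever compare $\hat{\Lcal}_t(\cdot,\hlb_t)$ values, so this holds) and that~(\ref{eq:lcal_hatlcal}) is valid for all $\xb_t\in\Xcal_t,\lambdab\in\Lambda$, which it is. Once this lemma is established, instantiating $\xb_t=\xb_t^\star$, summing over $t$, and combining with the decomposition~(\ref{eq:definition_gcal}) and Lemma~\ref{lem:standard_static_regret} produces the $\Scal_\Ab$ contribution of Theorem~\ref{thm:main}, as carried out in the proof of that theorem.
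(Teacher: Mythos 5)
Your proposal is correct and follows exactly the paper's own argument: apply the identity $\hat{\Lcal}_t(\xb_t,\lambdab) = \Lcal_t(\xb_t,\lambdab) - \lambdab^\top(\hAb_t - \Ab_t)\xb_t$ twice and invoke the optimality of $\hxb_t$ for $\hat{\Lcal}_t(\cdot,\hlb_t)$ over $\Xcal_t$, then subtract $\Lcal_t(\hxb_t,\lambdab)$. Your flagged subtlety — that $\hxb_t$ maximizes the estimated, not the true, Lagrangian — is precisely the point the paper's proof relies on (its prose even contains a small typo writing $\Lcal_t$ where $\hat{\Lcal}_t$ is meant), so nothing is missing.
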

\begin{proof}
For any $\lambdab \in \Lambda$ and $\xb_t \in \Xcal_t$, $\Lcal_t(\xb_t,\hlb_t) - \Lcal_t(\hxb_t,\lambdab)$ is equal to
\begin{eqnarray*}
&=&\!\!\!  \hat{\Lcal}_t(\xb_t,\hlb_t) + \hlb_t^\top(\hAb_t - \Ab_t )\xb_t - \Lcal_t(\hxb_t,\lambdab)\\
&\leq&\!\!\! \hat{\Lcal}_t(\hxb_t,\hlb_t) + \hlb_t^\top(\hAb_t - \Ab_t )\xb_t - \Lcal_t(\hxb_t,\lambdab) \\
&=&\!\!\! \Lcal_t(\hxb_t,\hlb_t)  - \Lcal_t(\hxb_t,\lambdab) + \hlb_t^\top(\hAb_t - \Ab_t )(\xb_t - \hxb_t),
\end{eqnarray*}
where we have used twice the relationship $\hat{\Lcal}_t(\xb_t,\lambdab) = \Lcal_t(\xb_t,\lambdab) - \lambdab^\top(\hAb_t - \Ab_t )\xb_t$, and the fact that $\hxb_t$ is defined as $\arg\!\max_{\xb_t \in \Xcal_t}  \Lcal_t(\xb_t, \hlb_t)$.
\end{proof}
We next provide the details of the computations related to relationship between $\Ecal$ and $\Ecal^*$ presented in Table~\ref{tab:ecal_compact_version}. We focus on one of the displayed instantiations, since the arguments in those other cases follow along the same lines.
\begin{lemma} \label{lem:ecal_ecalstar} For any $\ub \in \Real^m$,
$$
\max_{\wb\in\Real^m} \Big\{ \ub^\top \wb - \Ical_{\Bcal_r \cap \Real^m_+}(\wb)  - \frac{1}{2s} \|  \wb \|_2^2 \Big\} = \Hcal_{r,s}( \| [\ub]_+\|_2),
$$
where $\Hcal_{r,s}(t) \defin \frac{1}{2} \min\{s t^2,\frac{r^2}{s}\} + r [ |t| -\frac{r}{s} ]_+$.
\end{lemma}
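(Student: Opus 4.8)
The plan is to compute the conjugate-type maximization in two stages, first separating the roles of the two constraints (the radius-$r$ ball and the positive orthant), then handling the quadratic penalty. Observe that the feasible set $\Bcal_r \cap \Real^m_+$ forces $\wb \geq 0$, so the bilinear term $\ub^\top\wb$ is maximized, component by component, by pushing $\wb$ in the direction of $[\ub]_+$: any negative component of $\ub$ can only hurt, so the optimal $\wb$ has support contained in the support of $[\ub]_+$ and $\ub^\top\wb = [\ub]_+^\top\wb$. Hence the problem reduces to
$$
\max_{\wb \in \Bcal_r \cap \Real^m_+} \Big\{ [\ub]_+^\top \wb - \tfrac{1}{2s}\|\wb\|_2^2 \Big\}.
$$
Since both the linear and quadratic terms now involve only $\|\wb\|_2$ and the inner product with the fixed nonnegative vector $[\ub]_+$, the optimal $\wb$ is a nonnegative multiple of $[\ub]_+$, say $\wb = \theta\, [\ub]_+/\|[\ub]_+\|_2$ with $\theta \in [0,r]$ (assuming $[\ub]_+ \neq 0$; the case $[\ub]_+ = 0$ gives value $0 = \Hcal_{r,s}(0)$ trivially).

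\textbf{Reduction to a scalar problem.} Writing $t \defin \|[\ub]_+\|_2 \geq 0$, the maximization becomes the one-dimensional problem
$$
\max_{0 \le \theta \le r} \Big\{ t\,\theta - \tfrac{1}{2s}\,\theta^2 \Big\}.
$$
The unconstrained maximizer of $\theta \mapsto t\theta - \theta^2/(2s)$ is $\theta^\star = st$, with value $st^2/2$. If $st \le r$, i.e. $t \le r/s$, the constraint $\theta \le r$ is inactive and the optimal value is $\tfrac12 s t^2 = \tfrac12 \min\{s t^2, r^2/s\}$ while $r[\,|t| - r/s\,]_+ = 0$, matching $\Hcal_{r,s}(t)$. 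If $st > r$, i.e. $t > r/s$, the concave quadratic is increasing on $[0,r]$, so $\theta = r$ is optimal with value $rt - r^2/(2s)$. It remains to check $rt - r^2/(2s) = \tfrac12\cdot\tfrac{r^2}{s} + r(t - r/s) = \tfrac12 \min\{st^2, r^2/s\} + r[\,|t| - r/s\,]_+$, using $\min\{st^2, r^2/s\} = r^2/s$ and $[\,|t|-r/s\,]_+ = t - r/s$ in this regime; the algebra gives $r^2/(2s) + rt - r^2/s = rt - r^2/(2s)$, as required.

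\textbf{Assembling.} Combining the two cases, the scalar maximum equals $\Hcal_{r,s}(t)$ for all $t \ge 0$, hence the original maximum over $\wb$ equals $\Hcal_{r,s}(\|[\ub]_+\|_2)$. The only mild subtlety is justifying that $\ub^\top \wb$ may be replaced by $[\ub]_+^\top \wb$ and that the optimal $\wb$ is radial in the direction $[\ub]_+$; both follow from the separability of the orthant constraint and the fact that, for fixed $\|\wb\|_2$, the objective $[\ub]_+^\top\wb - \|\wb\|_2^2/(2s)$ is maximized by aligning $\wb$ with $[\ub]_+$ (Cauchy--Schwarz, with equality attainable inside $\Real^m_+$ since $[\ub]_+ \ge 0$). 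No step presents a genuine obstacle; the main care needed is bookkeeping the two regimes $t \lessgtr r/s$ so that the piecewise definition of $\Hcal_{r,s}$ is recovered exactly.
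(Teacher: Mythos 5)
Your proof is correct, and it takes a genuinely different route from the paper's. You argue directly on the primal side: first you use the orthant constraint to replace $\ub$ by $[\ub]_+$ (zeroing the coordinates where $u_i<0$ never hurts, and only helps the quadratic term), then you use Cauchy--Schwarz with feasibility of the aligned point $\theta\,[\ub]_+/\|[\ub]_+\|_2 \in \Real^m_+$ to reduce to the one-dimensional concave problem $\max_{0\le\theta\le r}\{t\theta - \theta^2/(2s)\}$ with $t=\|[\ub]_+\|_2$, whose two regimes $st\le r$ and $st>r$ reproduce exactly the piecewise definition of $\Hcal_{r,s}$; the degenerate case $[\ub]_+=\zerob$ is also handled. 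The paper instead proceeds by Lagrangian duality: it introduces multipliers $\alpha\ge 0$ for the ball constraint and $\betab\in\Real^m_+$ for the orthant, maximizes out $\wb$ via the conjugate of the squared $\ell_2$ norm, minimizes over $(\alpha,\betab)$ (which is where $\|[\ub]_+\|_2$ appears), and invokes Slater's condition to close the duality gap. Your argument is more elementary and self-contained --- no strong-duality appeal is needed, and it exhibits the maximizer explicitly --- while the paper's conjugacy/duality computation is more mechanical and extends more readily to the other entries of Table~\ref{tab:ecal_compact_version} (e.g., the $\ell_q$-based variants), which is presumably why the authors chose it. Both establish the stated identity.
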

\begin{proof}
We start by deriving the Lagrangian associated with
\begin{eqnarray*}
& \max_{\wb\in\Real^m}  \Big\{ \ub^\top \wb - \Ical_{\Bcal_r \cap \Real^m_+}(\wb)  - \frac{1}{2s} \|  \wb \|_2^2 \Big\} \\
= 
&\hspace{-1.8cm}  \max_{\wb\in\Bcal_r \cap \Real^m_+} \!\Big\{ \ub^\top \wb - \frac{1}{2s} \|  \wb \|_2^2 \Big\} 
\end{eqnarray*}
that is given by, for any $\alpha \geq 0, \betab \in \Real^m_+$,
$$
L(\wb,\alpha,\betab) = \ub^\top \wb + \frac{\alpha}{2}\big (r^2 -  \|\wb\|_2^2 \big) -  \frac{1}{2s} \|  \wb \|_2^2 + \betab^\top \wb.
$$
Maximizing out $\wb$, and recognizing the conjugate of the squared $\ell_2$ norm (Example 3.27 in \cite{Boyd2004}), we obtain the dual function
$$
\max_{\wb \in \Real^m} L(\wb,\alpha,\betab) = \frac{1}{2 (\alpha + 1/s)} \| \ub + \betab \|_2^2 + \frac{\alpha}{2} r^2.
$$
In turn, we minimize with respect to the dual variables
\begin{eqnarray*}
\min_{\alpha \geq 0, \betab \in \Real^m_+} \max_{\wb \in \Real^m} L(\wb,\alpha,\betab) = \\
\min_{\alpha \geq 0} \Big\{ \frac{1}{2 (\alpha + 1/s)} \| [\ub]_+ \|_2^2 + \frac{\alpha}{2} r^2\Big\} .
\end{eqnarray*}
The optimal $\alpha$ can then be easily computed and is equal to
$$
\Big[  \frac{\| [\ub]_+ \|_2}{r} - \frac{1}{s} \Big]_+.
$$
Plugging back this value into the dual function leads to the expression $\Hcal_{r,s}( \| [\ub]_+\|_2)$.
The equality holds by invoking strong duality, which applies based on Slater's constraint qualification~(Section 5.2.3 in \cite{Boyd2004}).
\end{proof}
\section{Appendix: Additional experiments for Section 4.1}
We show below additional results when $\{\Ab_t,\bb_t,\ub_t\}$ are generated according to different distributions, namely, Cauchy, uniform and gamma. As described in the protocol of Section 4.1, we continue to normalize $\{\Ab_t,\bb_t,\ub_t\}$ to unit norms.
The same conclusions as those explained in Section 4.1 of the paper hold. We observe that some instantiations of $\Ecal$ and distribution, e.g., $\ell_1$ with gamma in~Figure~\ref{fig:reward_vs_ecal_gamma}, lead to settings where the additive relaxation (12) appears as tighter, although our non-additive approach still offers better reward versus constraint violation tradeoffs.
\begin{figure}[h]
  \centering
  \includegraphics[width=0.5\textwidth]{./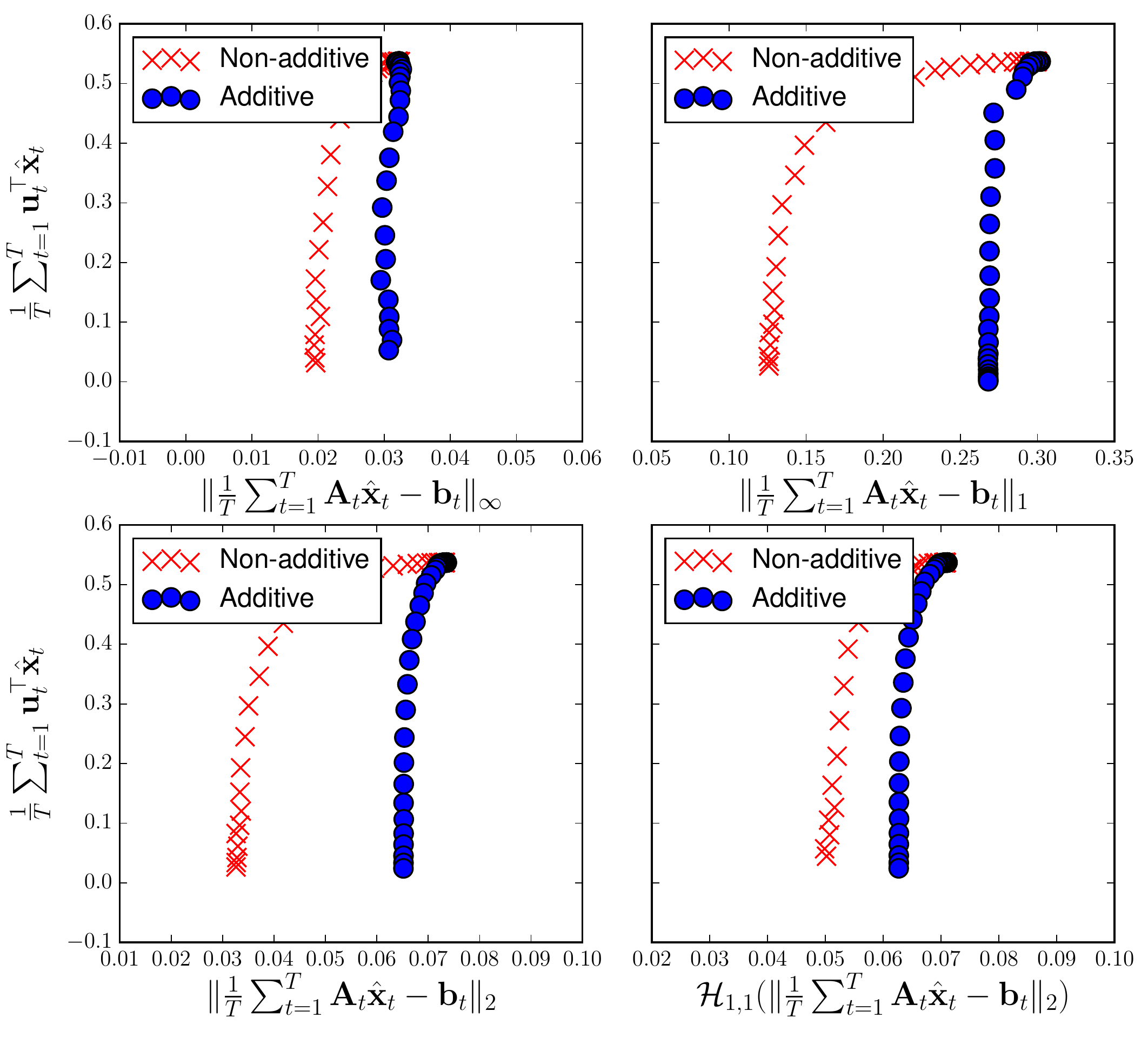}
  \caption{Reward $\frac{1}{T}\sum_t \ub_t^\top\hxb_t$ as a function of non-additive penalty $\Ecal(\frac{1}{T}\sum_t\mathbf{A}_t\hat{\mathbf{x}}_t-\mathbf{b}_t)/R_\lambda$ for 
  $R_\lambda = 2^{\gamma}$ with $\gamma \in \{-8,-7.5,\dots, 10\}$. The problem instances are generated with a standard Cauchy distribution.
  Red crosses correspond to our proposed online algorithm, while blue circles stand for a baseline algorithm with additive penalties.
  Each subplot displays a different instantiation of $\Ecal$, namely $\Ecal(\zb) = R_\lambda \cdot \| \zb \|_q$ for $q \in \{1,2,\infty\}$ and $\Ecal(\zb) = R_\lambda \cdot \Hcal_{1,1}(\|\zb\|_2)$ where $\Hcal_{1,1}$ is defined in Table~1.}\label{fig:reward_vs_ecal_cauchy}
\end{figure}
\begin{figure}[h]
  \centering
  \includegraphics[width=0.5\textwidth]{./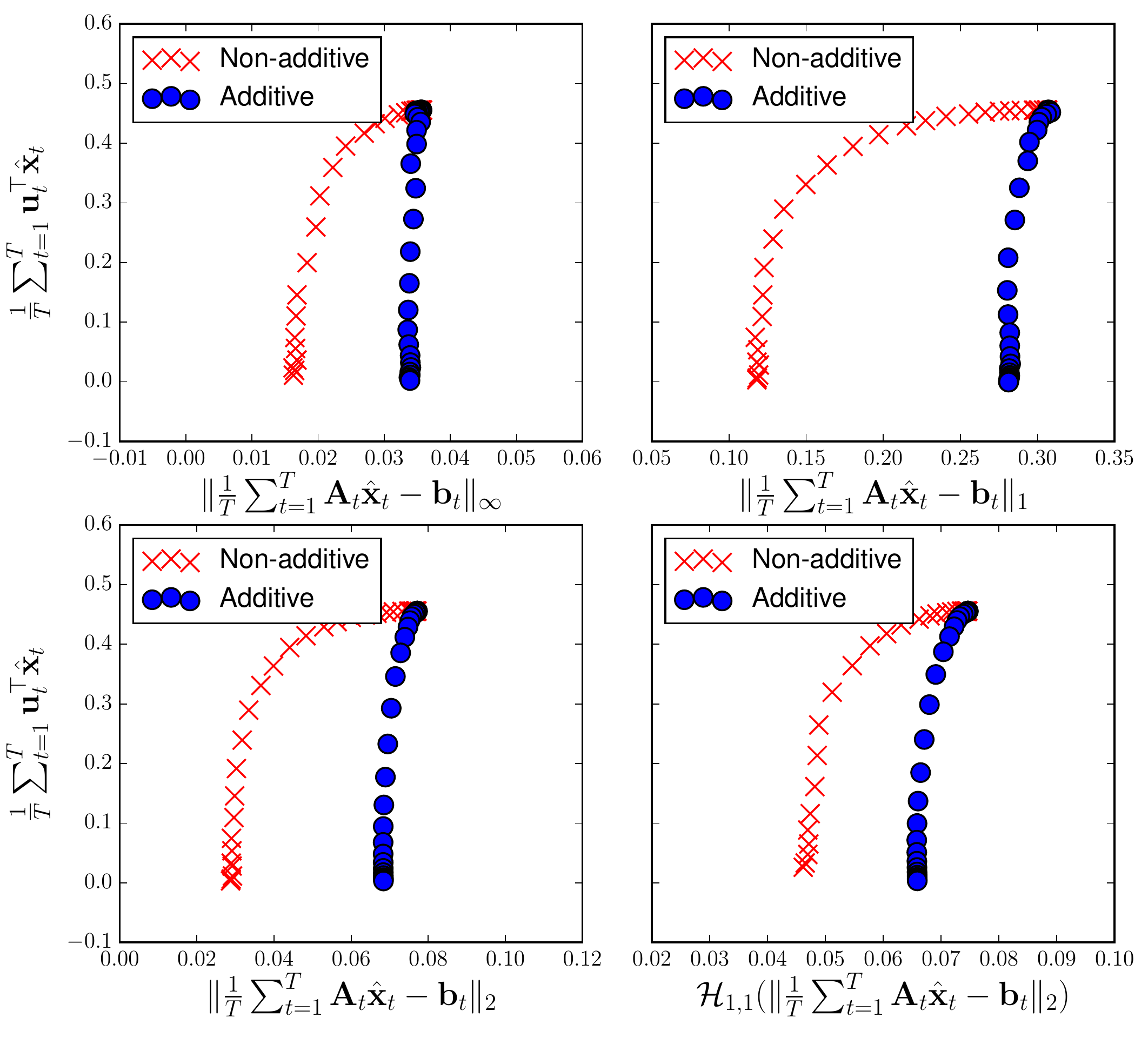}
  \caption{Reward $\frac{1}{T}\sum_t \ub_t^\top\hxb_t$ as a function of non-additive penalty $\Ecal(\frac{1}{T}\sum_t\mathbf{A}_t\hat{\mathbf{x}}_t-\mathbf{b}_t)/R_\lambda$ for 
  $R_\lambda = 2^{\gamma}$ with $\gamma \in \{-8,-7.5,\dots, 10\}$. The problem instances are generated with an uniform(-1,1) distribution.
  Red crosses correspond to our proposed online algorithm, while blue circles stand for a baseline algorithm with additive penalties.
  Each subplot displays a different instantiation of $\Ecal$, namely $\Ecal(\zb) = R_\lambda \cdot \| \zb \|_q$ for $q \in \{1,2,\infty\}$ and $\Ecal(\zb) = R_\lambda \cdot \Hcal_{1,1}(\|\zb\|_2)$ where $\Hcal_{1,1}$ is defined in Table~1.}\label{fig:reward_vs_ecal_uniform}
\end{figure}
\begin{figure}[h]
  \centering
  \includegraphics[width=0.5\textwidth]{./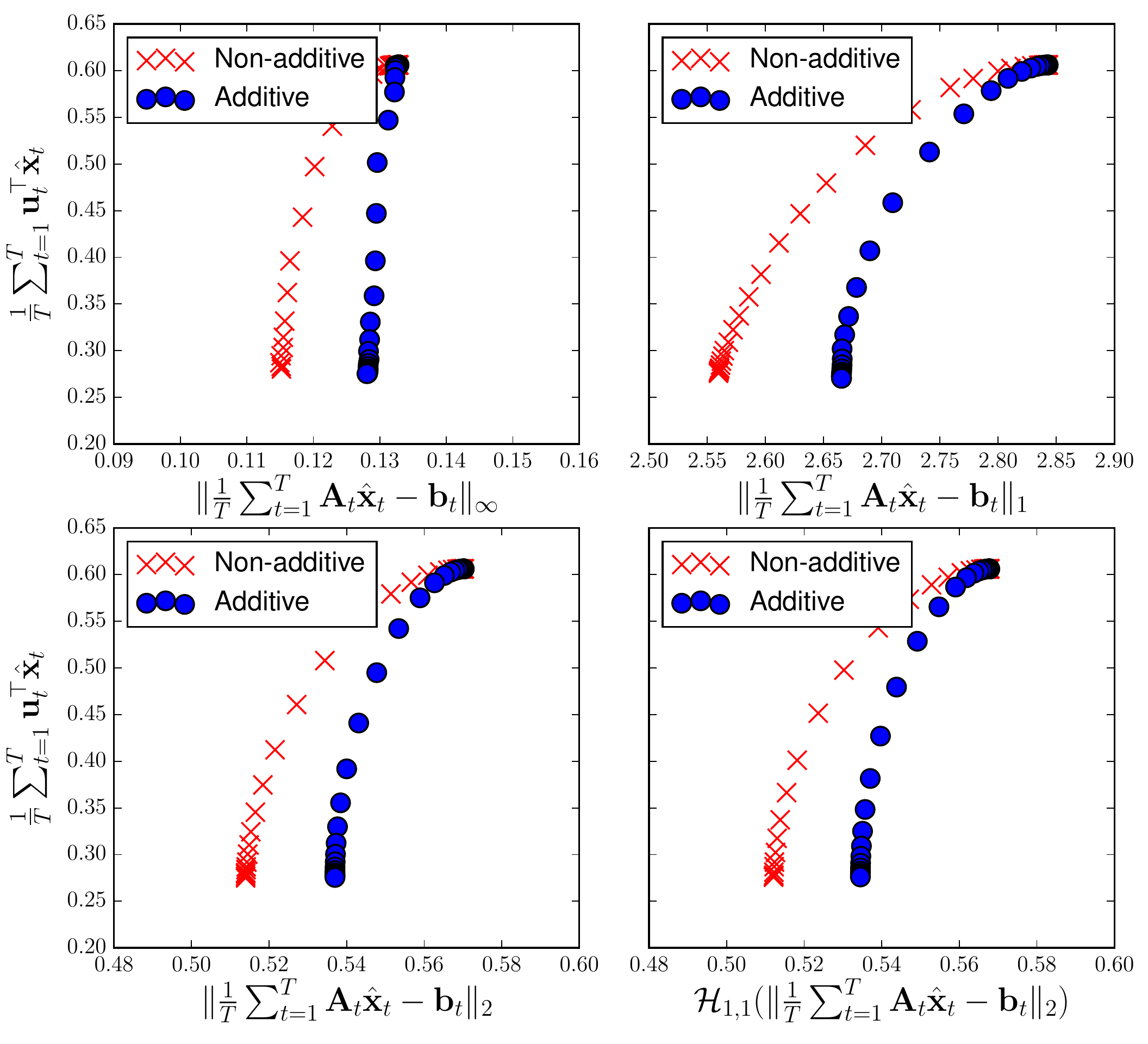}
  \caption{Reward $\frac{1}{T}\sum_t \ub_t^\top\hxb_t$ as a function of non-additive penalty $\Ecal(\frac{1}{T}\sum_t\mathbf{A}_t\hat{\mathbf{x}}_t-\mathbf{b}_t)/R_\lambda$ for 
  $R_\lambda = 2^{\gamma}$ with $\gamma \in \{-8,-7.5,\dots, 10\}$. The problem instances are generated with a gamma(2,2) distribution.
  Red crosses correspond to our proposed online algorithm, while blue circles stand for a baseline algorithm with additive penalties.
  Each subplot displays a different instantiation of $\Ecal$, namely $\Ecal(\zb) = R_\lambda \cdot \| \zb \|_q$ for $q \in \{1,2,\infty\}$ and $\Ecal(\zb) = R_\lambda \cdot \Hcal_{1,1}(\|\zb\|_2)$ where $\Hcal_{1,1}$ is defined in Table~1.}\label{fig:reward_vs_ecal_gamma}
\end{figure}

\bibliographystyle{apa}
\bibliography{bibliography}
\end{document}